\documentclass{article}

\usepackage{microtype}
\usepackage{graphicx}
\usepackage{subcaption}
\usepackage{booktabs} % for professional tables

\usepackage{hyperref}

\usepackage[accepted]{icml2026}

\usepackage{amsmath}
\usepackage{amssymb}
\usepackage{mathtools}
\usepackage{amsthm}

\usepackage[capitalize,noabbrev]{cleveref}

\theoremstyle{plain}
\newtheorem{theorem}{Theorem}[section]

\theoremstyle{definition}

\theoremstyle{remark}
\newtheorem{remark}[theorem]{Remark}
\usepackage{enumitem}
\usepackage{booktabs}
\usepackage{multirow}
\usepackage{colortbl}
\usepackage{xcolor}

\usepackage[disable,textsize=tiny]{todonotes}
\icmltitlerunning{MeshTok: Efficient Multi-Scale Tokenization for Scalable PDE Transformers}

\begin{document}

\twocolumn[
  \icmltitle{MeshTok: Efficient Multi-Scale Tokenization for Scalable PDE Transformers}

  % It is OKAY to include author information, even for blind submissions: the
  % style file will automatically remove it for you unless you've provided
  % the [accepted] option to the icml2026 package.

  % List of affiliations: The first argument should be a (short) identifier you
  % will use later to specify author affiliations Academic affiliations
  % should list Department, University, City, Region, Country Industry
  % affiliations should list Company, City, Region, Country

  % You can specify symbols, otherwise they are numbered in order. Ideally, you
  % should not use this facility. Affiliations will be numbered in order of
  % appearance and this is the preferred way.
\icmlsetsymbol{equal}{*}

\begin{icmlauthorlist}
  \icmlauthor{Yanshun Zhao}{equal,math}
  \icmlauthor{Xiaoyu Peng}{equal,math}
  \icmlauthor{Jiamin Jiang}{siar}
  \icmlauthor{Congcong Zhu}{siar}
  \icmlauthor{Jingrun Chen}{math,siar}
\end{icmlauthorlist}

\icmlaffiliation{math}{School of Mathematical Sciences, University of Science and Technology of China, Hefei 230026, China}
\icmlaffiliation{siar}{Suzhou Institute for Advanced Research, University of Science and Technology of China, Suzhou 215123, China}

\icmlcorrespondingauthor{Congcong Zhu}{cczly@ustc.edu.cn}
\icmlcorrespondingauthor{Jingrun Chen}{jingrunchen@ustc.edu.cn}

  % You may provide any keywords that you find helpful for describing your
  % paper; these are used to populate the "keywords" metadata in the PDF but
  % will not be shown in the document
  \icmlkeywords{Machine Learning, ICML}

  \vskip 0.3in
]

% this must go after the closing bracket ] following \twocolumn[ ...

% This command actually creates the footnote in the first column listing the
% affiliations and the copyright notice. The command takes one argument, which
% is text to display at the start of the footnote. The \icmlEqualContribution
% command is standard text for equal contribution. Remove it (just {}) if you
% do not need this facility.

% Use ONE of the following lines. DO NOT remove the command.
% If you have no special notice, KEEP empty braces:
\printAffiliationsAndNotice{\icmlEqualContribution}  % no special notice (required even if empty)
% Or, if applicable, use the standard equal contribution text:
% \printAffiliationsAndNotice{\icmlEqualContribution}

\begin{abstract}
Conventional patchified Transformers operate on uniform spatial partitions, distributing computational effort evenly across the domain irrespective of local features. This inflexible tokenization scheme is inherently limited in its ability to efficiently represent and process solutions to complex PDEs. To address this, we propose MeshTok, an adaptive mesh refinement (AMR)-inspired tokenization and sequence modeling framework. This method selectively refines spatial regions exhibiting sharp gradients, transient features, or multiscale structures, generating a heterogeneous set of multiscale tokens defined on a fixed simulation grid. These tokens are processed within a unified Transformer sequence, enabling the model to simultaneously capture coarse-grained global context and fine-grained local details without requiring specialized architectural components. Although adaptive refinement moderately increases token count, it promotes a more targeted allocation of computational resources to physically informative regions, which we view as a practical inductive bias rather than a formal optimality guarantee. Experimental evaluations across multiple PDE families and benchmark datasets demonstrate that MeshTok consistently improves the efficiency-accuracy trade-off compared to uniform-grid baselines. This suggests adaptive multiscale tokenization as a scalable and generalizable design principle for neural PDE modeling. Code is available at \url{https://github.com/SCAILab-USTC/MeshTok}.
% We study how to endow Transformer-based PDE predictors with adaptive token density, enabling computation to focus on locally challenging dynamics. Standard patchified Transformers operate on uniform partitions, which tend to distribute computation evenly across the domain regardless of local solution complexity. We propose MeshTok, an AMR-inspired tokenization and sequence modeling framework that selectively refines spatial regions exhibiting sharp transients or multiscale structures, producing a heterogeneous set of multi-scale tokens on a fixed simulation grid. These tokens are processed by a Transformer within a unified sequence representation, allowing the model to jointly reason over coarse global context and fine local details without introducing task-specific architectural modules. While adaptive refinement increases the token count moderately, it promotes a more targeted allocation of computation to physically informative regions, improving accuracy under comparable computational budgets. Experiments across diverse PDE families and benchmark datasets show that MeshTok consistently improves the trade-off between efficiency and accuracy relative to uniform-grid baselines, suggesting adaptive multi-scale tokenization as a scalable design principle for neural PDE modeling.

\end{abstract}

\section{Introduction}

Solving partial differential equations (PDEs) is a central yet challenging problem across science and engineering.
Classical numerical solvers are highly accurate, but often incur substantial computational cost, particularly for high-resolution simulations or repeated solves across parameterized PDE families~\cite{10.1145/2939672.2939738,10.5555/3305890.3306035}.
Motivated by the expressive power of deep learning and its fast inference once trained, people have witnessed growing interest in learning-based PDE solvers~\cite{712178,zhu2025physics,li2023geometry,brandstetter2022message,NEURIPS2021_d0921d44,NEURIPS2020_4b21cf96}.
Especially, physics-informed neural networks (PINNs)~\cite{RAISSI2019686,sirignano2018dgm,raissi2017physics,karniadakis2021piml,ZHU201956} and neural operators~\cite{sirignano2018dgm,han2018solving} have demonstrated the potential to amortize solution cost and enable rapid approximation for broad classes of PDEs.

Despite this progress, building models that reliably transfer across heterogeneous equations, geometries, parameter regimes, and forcing conditions remains challenging.
In practice, learned solvers can be sensitive to distribution shift, and adapting a pretrained model to new settings may still require additional training or calibration.
These observations motivate the development of \emph{PDE foundation models}: scalable architectures that can leverage diverse training data and provide strong performance across a wide range of PDE systems under a unified modeling interface.

Transformer architectures provide global interactions, flexible conditioning mechanisms, and strong extensibility, making them promising candidates for scalable PDE modeling.
However, directly adopting a ViT-style architecture faces a fundamental efficiency bottleneck.
Standard Transformers operate on uniformly partitioned patches, requiring dense tokenization across the entire spatial domain~\cite{vaswani2017attention,rao2021dynamicvit,liu2021swin}.
Even when PDE data are defined on a fixed high-resolution grid, the underlying solutions exhibit highly non-uniform complexity: many regions remain smooth and are well captured with low sampling density, while others contain sharp gradients, discontinuities, or multiscale structures that demand finer local representation.
Refining all patches everywhere is computationally prohibitive as attention scales quadratically with the token count, whereas using a single uniform coarse patchification can lead to large errors in challenging regions.

To address this challenge, we draw inspiration from adaptive mesh refinement (AMR) in numerical analysis and introduce a multi-scale tokenization framework that allocates token density where it matters most~\cite{BERGER1984484,BERGER198964,bar2019learning,DZANIC2024112924,gillette2024learning}.
Instead of changing the underlying simulation grid, we adaptively vary the tokenization stride on the same high-resolution field: smooth regions are encoded by coarse tokens, while regions with sharp transients or rich multiscale dynamics are assigned fine-grained tokens to capture intricate local behavior.
These heterogeneous tokens are jointly processed as a single sequence within a standard Transformer, enabling unified reasoning across multiple spatial scales while maintaining compatibility with existing Transformer architectures.
We further design a geometry-aware positional encoding tailored to irregular and multi-scale token layouts by encoding continuous spatial coordinates together with resolution (depth) information, which stabilizes optimization and supports effective cross-resolution interactions.
While the adaptive representation introduces controlled computational overhead, it consistently improves accuracy under comparable token budgets and naturally exploits the spatiotemporal coupling structure of PDEs, where local refinements can propagate to global solution quality.

\textbf{Our contributions are summarized as follows:}
\begin{itemize}
    \item We propose a unified AMR-inspired multi-scale tokenization framework built on Transformers for scalable PDE modeling, together with a geometry-aware positional encoding that enables stable training on irregular multi-scale token sets.
    \item We propose an activity-based indicator to guide token refinement in PDE Transformers, providing an analytically motivated and computationally efficient mechanism for adaptive resolution.
    \item We provide a theoretical discussion of the proposed AMR-inspired tokenization scheme, offering insights into how adaptive refinement may improve representation efficiency for PDE solutions with localized structures.
\end{itemize}

\section{Related Work}

\subsection{Adaptive Mesh Refinement (AMR)}
Traditional adaptive mesh refinement methods dynamically adjust spatial resolution during time integration by estimating local error indicators, marking cells for refinement, and refining or coarsening the mesh around salient features~\cite{BERGER1984484,BERGER198964}. These approaches allocate computational effort to regions requiring higher fidelity. Recently, machine learning has been explored to automate AMR strategies. For example, Foucart et al.~\cite{FOUCART2023112381} formulate AMR as a deep reinforcement learning problem, training policy networks to guide grid refinement from simulation data while reducing reliance on hand-designed heuristics. Freymuth et al.~\cite{freymuth2023swarm} propose Adaptive Swarm Mesh Refinement (ASMR), modeling the mesh as a collection of collaborating agents and using graph neural networks to propagate information between neighboring cells. These learning-based AMR approaches have shown promising transferability across related PDE settings and can achieve competitive trade-offs between accuracy and cost compared to classical heuristic refinement strategies in reported experiments. More recently, Xu et al.~\cite{xu2025amr} introduce the AMR-Transformer, which integrates an AMR scheme with a transformer-based neural CFD solver to capture long-range interactions in fluid simulation. By combining task-aware pruning with adaptive refinement, AMR-Transformer dynamically adjusts the mesh during inference. Overall, these works reflect a trend toward data-driven refinement frameworks that complement classical error-based heuristics. In contrast, our work adopts AMR as inspiration for \emph{token-level} adaptivity on fixed grids, allocating token density to locally complex regions while retaining compatibility with standard Transformer backbones.

\subsection{PDE Foundation Models}
Neural operator learning has emerged as an effective paradigm for PDE modeling by training networks that map input functions (e.g., initial or boundary conditions) to solution fields. Notable examples include the Deep Operator Network (DeepONet)~\cite{lu2019deeponet} and the Fourier Neural Operator (FNO)~\cite{DBLP:conf/iclr/LiKALBSA21}, which have demonstrated strong performance on fluid dynamics and weather prediction tasks. These models are often trained in a task-specific manner, and their performance may degrade under distribution shifts such as changes in equations or parameter regimes. To mitigate limited coverage within individual tasks and leverage heterogeneous data sources, recent work explores large-scale pretraining of neural operators across diverse PDE datasets. Zhou et al.~\cite{zhou2024strategies} systematically evaluate various pretraining strategies for neural operators and find that transfer learning or physics-informed pretraining can improve downstream performance. In parallel, there is growing interest in constructing PDE foundation models via multi-physics pretraining. For instance, Liu et al.~\cite{liu2024prose} propose PROSE-FD, which is pretrained on six families of parametrized fluid equations and demonstrates encouraging zero-shot transfer on heterogeneous two-dimensional flow problems. Similarly, Sun et al.~\cite{sun2025towards} explore multi-operator learning and extrapolation as steps toward more general-purpose PDE modeling. Wang et al.~\cite{wang2025mixtureofexperts} introduce the MoE Pre-training Operator Transformer (MoE-POT), which uses a layer-wise gating network to select specialized expert subnetworks. Together, these works suggest a growing trend toward pretraining large neural operators on diverse physics, forming models that can be adapted efficiently and, in some cases, transferred with minimal task-specific tuning.

\begin{figure*}[t]
  \centering
  \includegraphics[width=\textwidth]{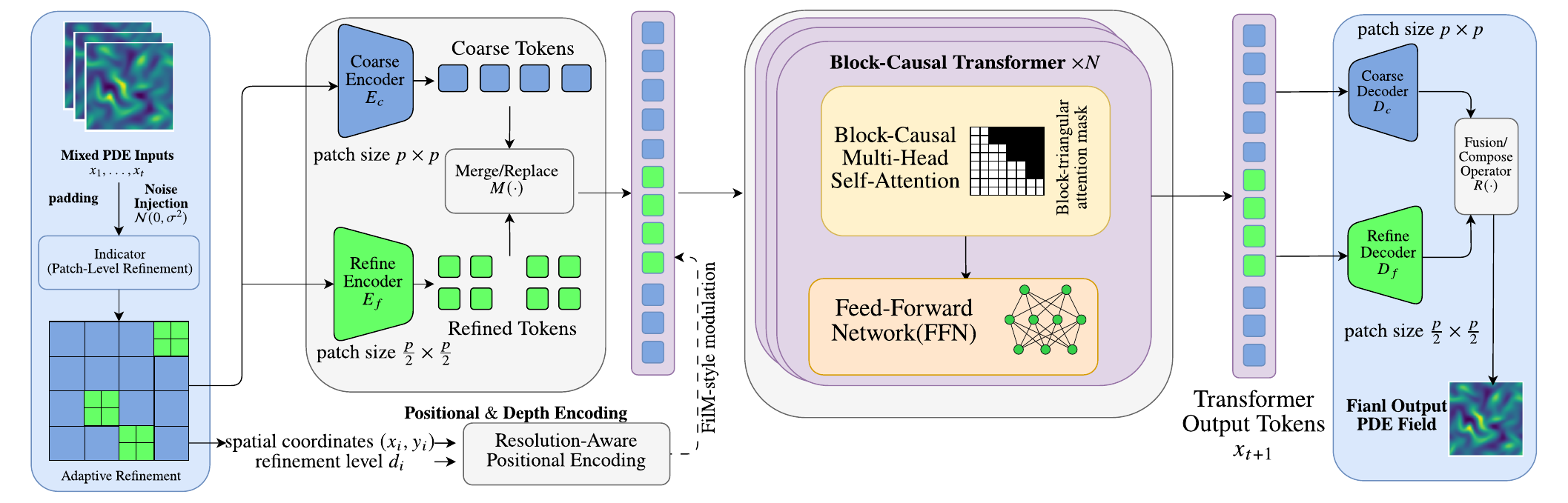}
  \caption{An illustration of our model architecture. Given input PDE states, an indicator predicts patch-level refinement scores on a coarse grid. Selected patches are recursively refined to generate a set of multi-scale tokens, which are encoded and merged into a unified sequence with geometry-aware positional encodings. A Transformer backbone processes the resulting tokens to model spatiotemporal dependencies, and the outputs are decoded and fused to reconstruct the PDE solution at the next time step.}
  \label{fig:pipeline}
\end{figure*}

\section{Methodology}

Conventional Transformer architectures process structured data by partitioning the input domain into uniformly sized patches, allocating computation uniformly across the spatial domain. While this design is effective for natural images or language tokens, it is often suboptimal for partial differential equation (PDE) data. PDE solutions typically exhibit strong spatial heterogeneity: large regions may remain smooth and low-frequency, whereas localized areas contain sharp gradients, discontinuities, or multiscale structures. Uniform patchification can therefore lead to inefficient use of the token budget, over-representing smooth regions while under-representing locally complex dynamics~\cite{roohi2025shock,nekoozadeh2023multiscale}.

Motivated by this observation, we propose \textbf{MeshTok}, an AMR-inspired multi-scale tokenization framework for Transformer-based PDE modeling. As illustrated in Fig.~\ref{fig:pipeline}, the overall pipeline consists of two main components: an \emph{indicator} and a Transformer backbone. Instead of changing the underlying simulation grid, MeshTok adaptively varies the patch sizes (i.e., tokenization stride) on a fixed-resolution field according to local solution complexity.

The resulting heterogeneous tokens, spanning multiple spatial scales, are processed by a shared Transformer backbone within a unified sequence representation, enabling information exchange across scales. To encode spatial structure under non-uniform patch sizes, we introduce a geometry-aware positional encoding that accounts for both token coordinates and scale (depth) information. This design improves training stability and facilitates effective cross-scale interactions under heterogeneous token layouts. Together, these components allow MeshTok to adapt its representational granularity to the intrinsic complexity of PDE solutions, improving accuracy under comparable computational budgets.

\subsection{Indicator for Adaptive Patch Refinement}

\begin{figure}[t]
  \centering
  \includegraphics[width=\linewidth]{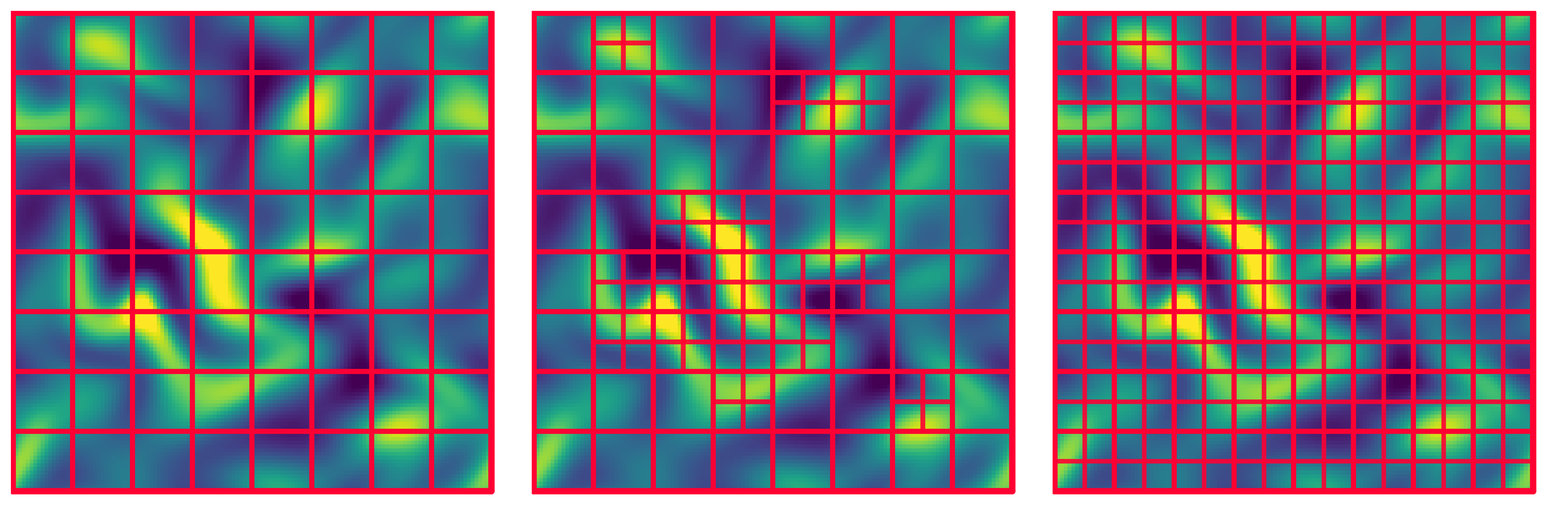}
  \caption{Visualization of tokenizations for a $128\times128$ PDE field.
Left: uniform $8\times8$ patch grid (patch size $16\times16$).
Middle: MeshTok refinement that further splits a subset of coarse patches (25\% in this example), concentrating refined tokens near high-variation regions.
Right: uniform $16\times16$ patch grid (patch size $8\times8$), shown with major ($8\times8$) and minor ($16\times16$) grid lines.}
  \label{fig:pde_grid_vis}
\end{figure}

An example of the resulting activity-based refinement pattern is shown in Fig.~\ref{fig:pde_grid_vis}. Under a refinement ratio $k\in(0,1)$, MeshTok refines the top-$k$ fraction of coarse patches and splits each selected patch into a $2\times2$ sub-grid to obtain fine-grained tokens.
Unless otherwise specified, we set $k=0.25$, which provides a practical balance between refinement effectiveness and computational overhead: smaller ratios may under-refine complex regions, while larger ratios can diminish efficiency gains.

To decide which regions to refine, we use an \emph{activity-based} indicator computed directly from the input field.
The key motivation is that PDE solutions are spatially heterogeneous: most regions are smooth and can be represented coarsely, whereas sharp transients and multiscale structures require higher token density.
We therefore assign each coarse patch an activity score that measures local variation.
Specifically, we compute a per-grid-point activity map by combining gradient magnitude and Laplacian energy,
\[
a(u,v)=\|\nabla x(u,v)\|_2+\lambda\|\Delta x(u,v)\|_2^2,
\]
where $\nabla$ and $\Delta$ are implemented using standard finite-difference stencils, and the norm is taken over the channels.
This choice is motivated by complementary sensitivities: the gradient term highlights sharp interfaces and shocks, while the Laplacian term emphasizes high-curvature or rapidly oscillating patterns that often correspond to fine-scale structures.
We then average $a(u,v)$ within each coarse patch $i$ to obtain its patch score
\[
s_i=\frac{1}{|\Omega_i|}\sum_{(u,v)\in\Omega_i}a(u,v),
\]
where $\Omega_i$ denotes the set of grid points covered by patch $i$.
Finally, we select the top-$\lfloor kN\rfloor$ patches with the largest $s_i$ for refinement, producing a heterogeneous multi-scale token set that is processed by a shared Transformer backbone.

We also consider other refinement indicators, including random refinement, an a posteriori error-improvement criterion, and end-to-end learned indicators; these variants are evaluated in the ablation study (Appendix~\ref{ablation studies}). During autoregressive rollout, the refinement scores are recomputed at every prediction step from the model's current input state.
After the observed history is exhausted, this state contains the model's own previous predictions rather than ground-truth future labels, so the refinement policy remains fully inference-available.
The refinement ratio $k$ is fixed throughout rollout, keeping the token count and compute budget constant even though the selected patch locations may change over time.

\subsection{Transformer Backbone with Multi-Scale Tokens}
\paragraph{Overview}
Our model uses a shared Transformer backbone operating on a unified multi-scale token sequence.
Multi-scale here refers to different \emph{patch sizes} (token density) on a fixed $H\times W$ simulation grid, rather than changing the underlying spatial resolution.
For temporal modeling, we adopt block-causal self-attention along the time dimension (similar to BCAT~\cite{liu2025bcat}), such that tokens at time step $t$ cannot attend to future steps $t'>t$, while preserving full attention among spatial tokens within each time step.

\paragraph{Coarse and Fine Token Embeddings}
Given a PDE field $x^t\in\mathbb{R}^{H\times W\times C}$ at time step $t$, we construct patch embeddings at two patch sizes.
The coarse encoder $\mathcal{E}_{\mathrm{c}}$ operates on non-overlapping patches of size $p\times p$ and produces
\[
\mathbf{Z}_{\mathrm{c}}^{t}=\mathcal{E}_{\mathrm{c}}(x^t)\in\mathbb{R}^{N_{\mathrm{c}}\times d},\qquad
N_{\mathrm{c}}=\frac{H}{p}\cdot\frac{W}{p}.
\]
In parallel, the fine encoder $\mathcal{E}_{\mathrm{f}}$ operates on smaller patches of size $(p/2)\times(p/2)$, yielding
\[
\mathbf{Z}_{\mathrm{f}}^{t}=\mathcal{E}_{\mathrm{f}}(x^t)\in\mathbb{R}^{(4N_{\mathrm{c}})\times d}.
\]

\paragraph{Indicator-Guided Token Merging}
An indicator outputs the indices of coarse patches selected for refinement,
\[
\mathcal{I}^t=\mathrm{Indicator}(x^t),\qquad |\mathcal{I}^t|=m,\ \ m=\lfloor kN_{\mathrm{c}}\rfloor,
\]
where $k\in(0,1)$ is the refinement ratio.
For each $i\in\mathcal{I}^t$, the corresponding coarse token $\mathbf{Z}_{\mathrm{c}}^{t,(i)}$ is replaced by its four associated fine tokens extracted from $\mathbf{Z}_{\mathrm{f}}^{t}$.
We denote this operation by a merge operator
\[
\mathbf{Z}^{t}=\mathcal{M}\!\left(\mathbf{Z}_{\mathrm{c}}^{t},\mathbf{Z}_{\mathrm{f}}^{t},\mathcal{I}^t\right)
\in\mathbb{R}^{(N_{\mathrm{c}}+3m)\times d},
\]
which yields a unified multi-scale token sequence at time step $t$.

\paragraph{Block-Causal Temporal Transformer}
Given an input history $\{x^1,\dots,x^T\}$, we apply the above procedure to obtain multi-scale tokens $\{\mathbf{Z}^1,\dots,\mathbf{Z}^T\}$ and concatenate them into
\[
\mathbf{Z}^{1:T}=\left[\mathbf{Z}^1,\mathbf{Z}^2,\ldots,\mathbf{Z}^T\right].
\]
The Transformer backbone processes this sequence with a block-causal attention mask,
\[
\widetilde{\mathbf{Z}}^{1:T}=\mathcal{T}_{\mathrm{BC}}\!\left(\mathbf{Z}^{1:T}\right),
\qquad
\text{Mask}(t,t')=0\ \text{for } t'>t,
\]
while allowing full attention among spatial tokens within each $\mathbf{Z}^{t}$.

\paragraph{Unmerging and Resolution-Consistent Decoding}
After global modeling, we use the indicator indices to recover coarse and fine token subsets from the final-step output tokens,
\[
\left(\widetilde{\mathbf{Z}}_{\mathrm{c}}^{T},\widetilde{\mathbf{Z}}_{\mathrm{f}}^{T}\right)
=\mathcal{S}\!\left(\widetilde{\mathbf{Z}}^{T},\mathcal{I}^{T}\right),
\]
where $\mathcal{S}$ inverts $\mathcal{M}$ by routing tokens back to their coarse or refined groups according to $\mathcal{I}^{T}$.
We then decode both subsets into \emph{the same} $H\times W$ resolution.
Concretely, the coarse decoder $\mathcal{D}_{\mathrm{c}}$ maps each coarse token to a $p\times p$ output patch, producing a full-resolution coarse prediction
\[
\hat{x}_{\mathrm{c}}^{\,T+1}=\mathcal{D}_{\mathrm{c}}(\widetilde{\mathbf{Z}}_{\mathrm{c}}^{T})\in\mathbb{R}^{H\times W\times C}.
\]
Similarly, the fine decoder $\mathcal{D}_{\mathrm{f}}$ maps each refined fine token to a $(p/2)\times(p/2)$ output patch.
For each refined coarse region $i\in\mathcal{I}^{T}$, the four decoded fine patches tile the same $p\times p$ spatial area covered by the original coarse patch.
By assembling all refined regions and leaving unrefined regions empty, we obtain a sparse fine prediction
\[
\hat{x}_{\mathrm{f}}^{\,T+1}=\mathcal{D}_{\mathrm{f}}(\widetilde{\mathbf{Z}}_{\mathrm{f}}^{T})\in\mathbb{R}^{H\times W\times C}.
\]
Finally, we fuse the full-resolution coarse prediction and the sparse fine prediction on the original grid using a lightweight CNN,
\[
\hat{x}^{\,T+1}=\mathcal{R}\!\left(\hat{x}_{\mathrm{c}}^{\,T+1},\hat{x}_{\mathrm{f}}^{\,T+1}\right),
\]
where $\mathcal{R}$ concatenates the two predictions and produces the final next-step solution. This fusion step preserves global consistency from coarse prediction while injecting fine-scale corrections in refined regions.

\subsection{Multi-Scale Position Encoding}
Handling tokens with heterogeneous patch sizes requires a positional encoding scheme that is consistent across scales and robust to non-uniform patch partitioning.
To this end, we construct a resolution-aware positional encoding that explicitly captures both spatial location and scale level, and apply it only once before tokens are fed into the Transformer.

We normalize the spatial domain to a unit square $[0,1]\times[0,1]$.
Let $N_y = H/p$ , $N_x = W/p$ and $\Delta_{c,x}=\frac{1}{N_x},\ \Delta_{c,y}=\frac{1}{N_y}$.
At the coarse scale, the domain is partitioned into an $N_y\times N_x$ patch grid.
We represent each coarse patch by the center of its cell,
\[
x_i=\Bigl(j+\tfrac{1}{2}\Bigr)\Delta_{c,x},\quad
y_i=\Bigl(i+\tfrac{1}{2}\Bigr)\Delta_{c,y}
\]
where $(i,j)$ denotes the patch index on the coarse grid.
At the fine scale, we use patches of size $(p/2)\times(p/2)$, which corresponds to a $2N_y\times 2N_x$ grid of cell centers with step sizes
\[
\Delta_{f,x}=\frac{1}{2N_x},\qquad \Delta_{f,y}=\frac{1}{2N_y}.
\]
In this way, every token (coarse or fine) is assigned a continuous coordinate
\[
\mathbf{p}_i=(x_i,y_i)\in[0,1]^2.
\]

In addition to spatial location, we encode the scale level to disambiguate tokens that may share similar coordinates but differ in patch size.
We assign a scale indicator
\[
d_i=
\begin{cases}
0, & \text{coarse token},\\
1, & \text{fine token},
\end{cases}
\]
and treat it as an explicit component of the positional representation.

For the unified multi-scale token sequence at time step $t$,
\[
\mathbf{Z}^{t}\in\mathbb{R}^{(N_{\mathrm{c}}+3m)\times d},\qquad
N_{\mathrm{c}}=N_xN_y,\ \ m=\lfloor kN_{\mathrm{c}}\rfloor,
\]
we map each token's positional tuple $(x_i,y_i,d_i)$ through a lightweight MLP $\phi(\cdot)$ to produce FiLM~\cite{perez2018film} parameters
\[
(\boldsymbol{\gamma}_i,\boldsymbol{\beta}_i)=\phi(x_i,y_i,d_i),\qquad
\boldsymbol{\gamma}_i,\boldsymbol{\beta}_i\in\mathbb{R}^{d}.
\]
Instead of additive positional embeddings, we modulate token features via a feature-wise affine transformation,
\[
\mathbf{Z}^{t,0}_i=\boldsymbol{\gamma}_i\odot \mathbf{Z}^{t}_i+\boldsymbol{\beta}_i,
\]
which is applied once before entering the Transformer and kept fixed across layers.
This resolution-aware FiLM conditioning provides a coherent notion of spatial locality and scale, enabling consistent interactions among tokens at different patch sizes under adaptive refinement.

\begin{table*}[t]
\centering
\caption{We conduct pretraining experiments by training all models from scratch on multiple datasets under a unified \textit{big} configuration, resulting in comparable model sizes. Model performance is assessed using relative $\ell_2$ error, with lower values corresponding to better predictive accuracy (The symbol ``-'' in the table indicates that the relative error exceeds $50\%$, in which case the result is no longer practically meaningful).}
\label{tab:pre_train}
\small
\setlength{\tabcolsep}{6pt}
\renewcommand{\arraystretch}{1.1}
% NOTE: vertical rules are generally discouraged with booktabs, but added as requested.
\begin{tabular}{l c| c c c c c}
\toprule
Model & Params (M) & {The Well} & PDENNeval & \multicolumn{3}{c}{PDEBench} \\
\cmidrule(lr){5-7}
& & gray-scott & allen-cahn & CNS (1.0,0.01) & CNS (0.1,0.01) & SWE \\
\midrule
DeepONet~\cite{lu2019deeponet}                  & 17.26 & 30.983 & --    & 13.357 & 4.589 & 8.439 \\
FNO~\cite{DBLP:conf/iclr/LiKALBSA21}            & 21.04 & 15.639 & 3.886 & 4.998  & 1.418 & 1.098 \\
ViT~\cite{dosovitskiy2020image}                 & 45.25 & 6.471  & 2.809 & 2.808  & 0.637 & 0.647 \\
MPP~\cite{masliaev2025towards}                  & 34.58   & 5.086  & 2.917 & 5.307  & 0.987 & 1.336 \\
DPOT~\cite{10.5555/3692070.3692773}             & 26.47 & 5.199  & 2.481 & 3.844  & 0.854 & 0.790 \\
MoE-POT~\cite{wang2025mixtureofexperts}         & 59.15 & 5.971  & 2.428 & 4.801  & 0.863 & 0.757 \\
BCAT~\cite{liu2025bcat}                         & 29.19 & 2.345  & 1.310 & 1.339  & 0.254 & 0.443 \\
Ours                                             & 31.43 & \textbf{2.095} & \textbf{1.056} & \textbf{1.187} & \textbf{0.249} & \textbf{0.276} \\
\bottomrule
\end{tabular}
\end{table*}

\section{Experiments}

We evaluate the proposed {MeshTok} model through a comprehensive set of experiments designed to assess its accuracy, generalization, scalability, and computational efficiency. Specifically, our evaluation includes:
(1) comparisons with uniform-patch Transformers and other baselines;
(2) transfer performance on downstream tasks;
(3) an analysis of scaling behavior with respect to model size and token budget;
(4) quantitative measurements of inference-time computational cost;
(5) an additional evaluation of model extension to 3D PDE data;
and (6) ablation studies examining the impact of key hyperparameters. Unless otherwise stated, all relative errors in this section and the appendix are reported in percentage points, with the percent sign omitted.

\paragraph{Pretraining Datasets}
During pretraining, we leverage a heterogeneous collection of PDE datasets drawn from PDEBench~\cite{takamoto2022pdebench}, PDENNEval~\cite{ijcai2024p573}, and The Well~\cite{ohana2024well}. The mathematical formulations of the underlying PDEs are summarized in Appendix~\ref{Mathematical Forms of Datasets}. To ensure consistency across datasets with diverse spatial resolutions, domains, and variable dimensions, we adopt a unified preprocessing pipeline based on padding and masking. A detailed description of the preprocessing procedures is deferred to Appendix~\ref{The method of processing data}.

\paragraph{Training Setup}
Unless otherwise specified, all model architectures and hyperparameter configurations are provided in Appendix~\ref{model detail}. We train all models using the AdamW optimizer~\cite{Loshchilov2017FixingWD} with a learning rate of $1\times10^{-4}$ and a batch size of $8$. Pretraining is conducted on a single NVIDIA A800 GPU for $20$ epochs, where each epoch consists of $4{,}000$ optimization steps (i.e., $80{,}000$ steps in total). For downstream tasks, we fine-tune the pretrained models for $5$ epochs under the same training schedule (i.e., $20{,}000$ steps in total). Detailed descriptions of the training protocol, evaluation procedure, and implementation settings are deferred to Appendix~\ref{training setting}.

\paragraph{Baselines} As foundation-model baselines, we compare our approach with several representative architectures, including ViT~\cite{dosovitskiy2020image}, MPP~\cite{masliaev2025towards}, DPOT~\cite{10.5555/3692070.3692773}, BCAT~\cite{liu2025bcat}, and MoE-POT~\cite{wang2025mixtureofexperts}. We additionally include established PDE learning models such as FNO~\cite{DBLP:conf/iclr/LiKALBSA21} and DeepONet~\cite{lu2019deeponet} as comparative baselines. All models are trained and evaluated using comparable data splits and closely matched optimization settings to ensure a fair and meaningful comparison.

In addition, we investigate how different architectural choices affect refinement behavior, with detailed results reported in Appendix~\ref{different architecture for refinement}.

\subsection{Multi-Dataset Training Results}
\noindent
We evaluate our method against several representative PDE foundation baselines on multiple public benchmarks in the pretraining setting. As shown in Table~\ref{tab:pre_train}, our model achieves consistently low relative errors across The Well, PDENNEval, and PDEBench under comparable training conditions, and is competitive with prior foundation models on all evaluated tasks.

\noindent
Across these benchmarks, conventional neural operator baselines (e.g., DeepONet and FNO) attain reasonable accuracy on some settings, but their performance varies noticeably across PDE families and parameter regimes. 
In particular, on more challenging fluid dynamics benchmarks in PDEBench (CNS and SWE), their relative errors are substantially higher than those of recent foundation models, and some results are not reported (marked as ``--''). 
In contrast, our method yields the lowest errors among the compared approaches on both CNS and SWE in PDEBench, and also improves over previous methods on Gray--Scott in The Well and Allen--Cahn in PDENNEval. 
Overall, these results suggest improved transferability across different equations and datasets, while maintaining stable accuracy across heterogeneous data distributions.

\noindent
We hypothesize that these improvements are supported by three design components. First, the coarse-to-fine encoder--decoder provides multi-resolution representations that jointly capture smooth global structures and locally complex regions, leading to more diverse and comprehensive spatial features. Second, the spatiotemporal block-causal self-attention introduces a structured inductive bias for modeling long-horizon temporal evolution while preserving full spatial interactions within each time step. Third, the adaptive patch refinement mechanism allocates additional resolution to regions with higher local complexity, which can improve fine-scale accuracy without uniformly increasing computation.

\subsection{Downstream Fine-Tuning}
\label{sec:finetune_unseen}

We evaluate whether pretraining on source equations helps MeshTok adapt to \emph{unseen PDE families} via downstream fine-tuning~\cite{zhou2024strategies}.
To ensure a strict transfer setting, we select four downstream equations that are not included during pre-training: Reaction--Diffusion from \textsc{PDEBench}, Shear flow from \textsc{The Well}, and Burgers and Black--Scholes--Barenblatt from \textsc{PDENNeval}.
These tasks therefore test cross-equation generalization rather than in-domain memorization.

We compare MeshTok with the BCAT baseline under two regimes: \textbf{Scratch} and \textbf{Pretrained $\rightarrow$ finetune}.
For each downstream PDE, both models use the same downstream data, optimization hyperparameters, and fine-tuning budget of $5$ epochs.
Figure~\ref{fig:pretrain_benefit} summarizes the fine-tuning curves, and Table~\ref{tab:finetune_results} reports the final errors at epoch 5.

Overall, pretraining improves both BCAT and MeshTok on three out of four downstream PDE families.
For MeshTok, the epoch-5 error decreases on Shear flow ($3.817 \rightarrow 3.334$), Burgers ($0.324 \rightarrow 0.252$), and Reaction--Diffusion ($11.096 \rightarrow 8.106$).
A similar trend holds for BCAT on the same three tasks.
On Black--Scholes--Barenblatt, pretraining does not improve the final error for either model, likely due to the larger distribution shift from the physical systems dominating the pretraining corpus.
Across fine-tuned models, MeshTok achieves the lowest error on three out of four downstream PDE families, while BCAT trained from scratch remains best on Black--Scholes--Barenblatt.

\begin{table*}[t]
\centering
\caption{
Fine-tuning results at epoch 5 across four PDE families.
We compare the BCAT baseline and MeshTok under two settings:
training from scratch and fine-tuning from a pre-trained initialization.
Lower relative $\ell_2$ error indicates better prediction accuracy.
}
\label{tab:finetune_results}
\begin{tabular}{llcccc}
\toprule
Model & Setting & Shear flow & Burgers & Black--Scholes--Barenblatt & React--Diff \\
\midrule
\multirow{2}{*}{BCAT}
& Scratch & 4.827 & 0.435 & \textbf{0.123} & 20.626 \\
& Pretrained $\rightarrow$ finetune & 3.487 & 0.277 & 0.193 & 9.414 \\
\midrule
\multirow{2}{*}{Ours}
& Scratch & 3.817 & 0.324 & 0.127 & 11.096 \\
& Pretrained $\rightarrow$ finetune & \textbf{3.334} & \textbf{0.252} & 0.142 & \textbf{8.106} \\
\bottomrule
\end{tabular}
\end{table*}

\begin{figure}[H]
    \centering
    \includegraphics[width=0.45\textwidth]{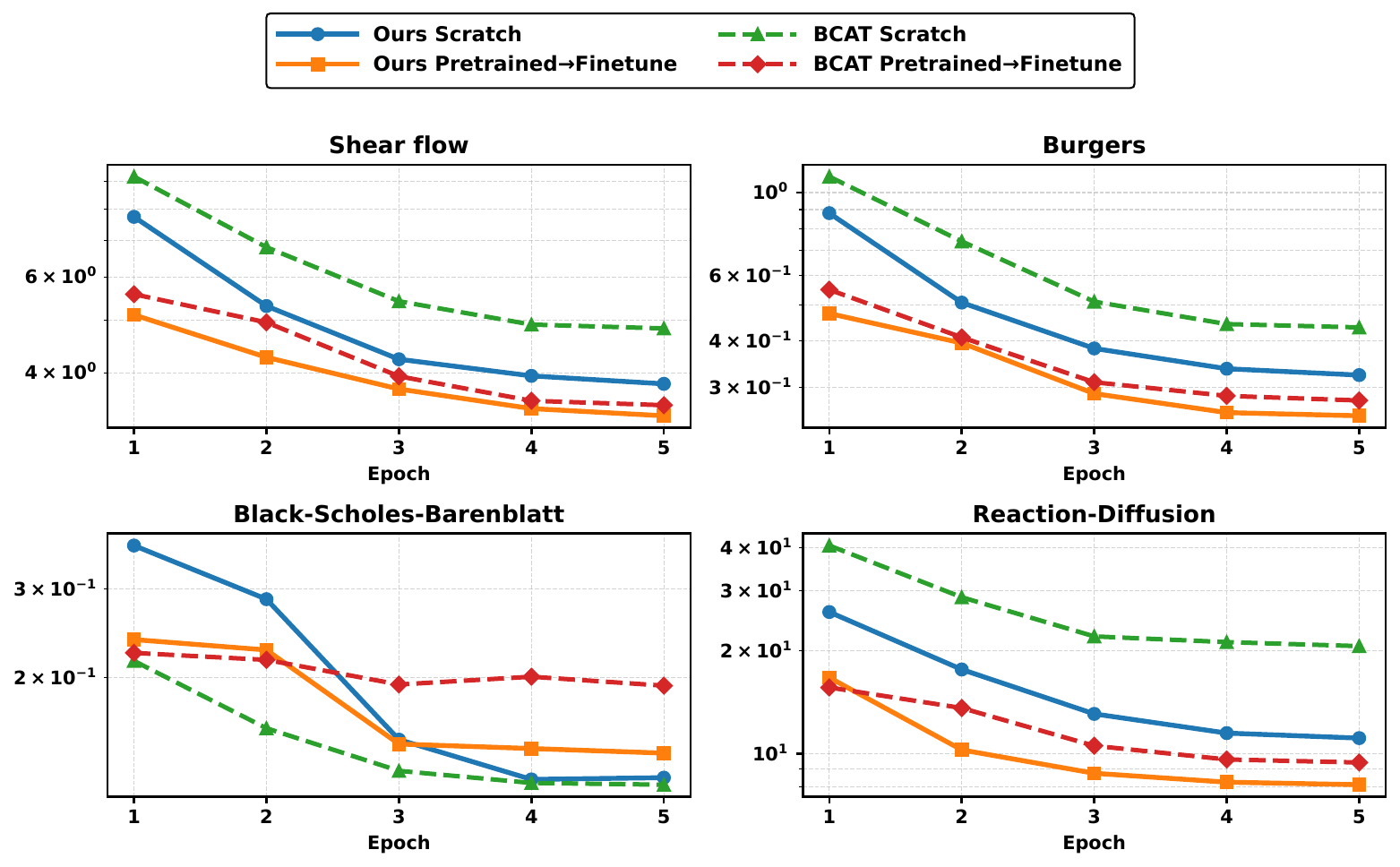}
    \caption{Benefit of pretraining across four PDE families. Each subplot compares training from scratch against fine-tuning from a pretrained model for BCAT and MeshTok. Lower relative $\ell_2$ error is better.}
    \label{fig:pretrain_benefit}
\end{figure}

These results suggest that pretraining can provide a useful initialization that transfers beyond the source equations and improves sample efficiency in several downstream settings.
They also show that MeshTok retains a favorable advantage over the BCAT baseline after fine-tuning on most unseen PDE families.

\subsection{Refinement Gains and Scaling Behavior}

We study how spatial refinement interacts with model scaling, and whether refinement benefits persist across different capacity regimes.
To this end, we evaluate three model scales---\textsc{Ours-Small}, \textsc{Ours-Big}, and \textsc{Ours-Large}---which share the same architecture and training setup but differ only in network width and depth. The specific model configuration can be found in Table~\ref{tab:model_sizes}. For each scale, we compare three refinement settings: \textbf{No refinement} (uniform coarse partition), \textbf{Ours (activity-based)} refinement (our default policy that refines a fixed fraction of patches based on gradient/energy activity), and \textbf{Full refinement} (uniformly refining all patches to the finest resolution). We report averaged relative $\ell_2$ errors for both \textbf{1-step} prediction and \textbf{10-step} autoregressive rollout. Quantitative results are shown in Table~\ref{tab:refine_scaling}, and the corresponding 1-step scaling trend is visualized in Figure~\ref{fig:scaling_1}.

\begin{table}[H]
\centering
\caption{Averaged relative $\ell_2$ error (lower is better) across all benchmark datasets under different refinement settings and model scales. }
\label{tab:refine_scaling}
\begin{tabular}{llccc}
\toprule
\multirow{2}{*}{Refinement} & \multirow{2}{*}{Horizon} & \multicolumn{3}{c}{Model scale} \\
\cmidrule(lr){3-5}
 & & \textsc{Small} & \textsc{Big} & \textsc{Large} \\
\midrule
\multirow{2}{*}{No refinement} 
& 1-step  & 1.979 & 1.138 & 0.924 \\
& 10-step & 4.894 & 2.607 & 2.125 \\
\midrule
\multirow{2}{*}{Ours}
& 1-step  & 1.639 & 0.972 & 0.837 \\
& 10-step & 4.027 & 2.261 & 1.966 \\
\midrule
\multirow{2}{*}{Full refinement}
& 1-step  & 2.014 & 0.911 & 0.716 \\
& 10-step & 5.287 & 2.221 & 1.696 \\
\bottomrule
\end{tabular}
\end{table}

\begin{figure}[t]
  \centering
  \includegraphics[width=\linewidth]{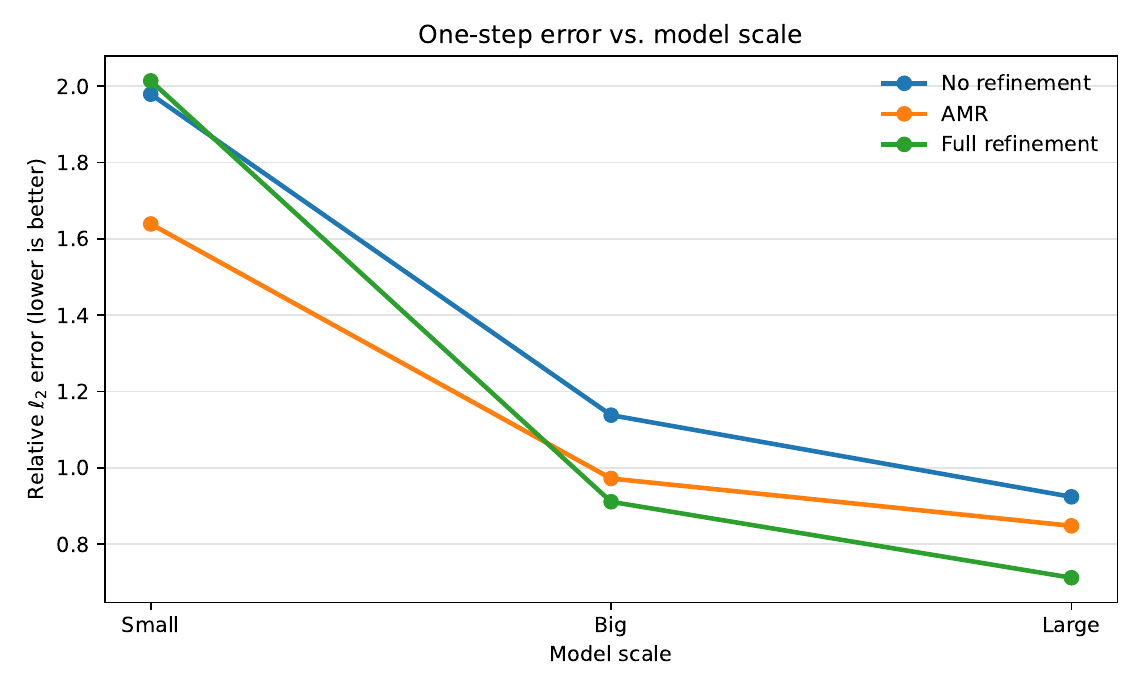}
  \caption{One-step prediction error versus model scale under different refinement settings.}
  \label{fig:scaling_1}
\end{figure}

Table~\ref{tab:refine_scaling} shows a consistent scaling behavior: increasing model capacity reduces both 1-step and 10-step errors across all refinement settings.
For example, under AMR refinement, the 1-step error decreases from 1.639 (\textsc{Small}) to 0.972 (\textsc{Big}) and further to 0.837 (\textsc{Large}), and the 10-step error decreases from 4.027 to 2.261 and 1.966, respectively.
This indicates that the proposed model benefits from additional capacity in both short-horizon prediction and long-horizon rollout.

At each model scale, \emph{Ours} consistently improves over \emph{No refinement} for both horizons, indicating that refining a subset of patches based on physical activity is beneficial across capacity regimes.
For \textsc{Big} and \textsc{Large}, \emph{Full refinement} yields the lowest 1-step errors (0.911 and 0.716), while \emph{Ours} remains close (0.972 and 0.837) and also improves 10-step rollout relative to \emph{No refinement} (2.261 vs.\ 2.607 for \textsc{Big}, and 1.966 vs.\ 2.125 for \textsc{Large}).
These results suggest that partial refinement can capture a substantial portion of the refinement benefit while preserving efficiency.

An important observation is that \textsc{Ours-Small} does not always benefit from uniform \emph{Full refinement}.
In particular, \emph{Full refinement} increases the 1-step error from 1.979 to 2.014 and the 10-step error from 4.894 to 5.287, suggesting that the smallest Transformer does not have sufficient capacity to effectively model and exploit a globally high-resolution token sequence.
By contrast, \emph{Ours} achieves substantially lower errors at the same scale (1-step: 1.639; 10-step: 4.027), outperforming \emph{Full refinement} on both horizons.
We attribute this behavior to our coarse--fine dual encoder--decoder, which provides a structured multi-resolution representation that explicitly assists the Transformer backbone as the primary modeling component.
Specifically, it preserves a compact coarse representation for global context, while selectively injecting fine tokens only in high-activity regions, so that the Transformer can focus its limited capacity on the most informative details without being overwhelmed by uniformly dense tokens.

Overall, across all three scales, activity-guided refinement provides consistent improvements over coarse tokenization, and its benefits remain compatible with model scaling. A three-seed robustness study in Appendix~\ref{sec:seed_robustness} further confirms that AMR consistently improves over no refinement in this scaling comparison, with small standard deviations across runs.

\subsection{Trade-Off between Accuracy and Efficiency}

We further evaluate the trade-off between accuracy and efficiency of refinement under a fixed model capacity, and use the \textsc{Ours-Big} configuration as a representative example.
We compare three inference settings in a consistent order throughout this section: \textbf{No refinement} (coarse tokens only), \textbf{Ours} (AMR with activity-based partial refinement), and \textbf{Full refinement} (uniform refinement everywhere).

Figure~\ref{fig:time_vs_error_big} summarizes the Pareto-style trade-off between per-step runtime and one-step prediction error under the \textsc{Ours-Big} model.
The plot includes a baseline line connecting \emph{No refinement} and \emph{Full refinement}, representing the reference trade-off achieved by uniformly changing the refinement level.
The gray region highlights configurations that dominate this baseline (simultaneously lower error and lower runtime), and the arrow indicates the direction of improvement (faster and more accurate).
Our method lies in this favorable region by achieving lower error than \emph{No refinement} while requiring substantially less time than \emph{Full refinement}, illustrating an improved accuracy--efficiency balance enabled by activity-guided partial refinement.

\begin{figure}[H]
  \centering
  \includegraphics[width=0.78\linewidth]{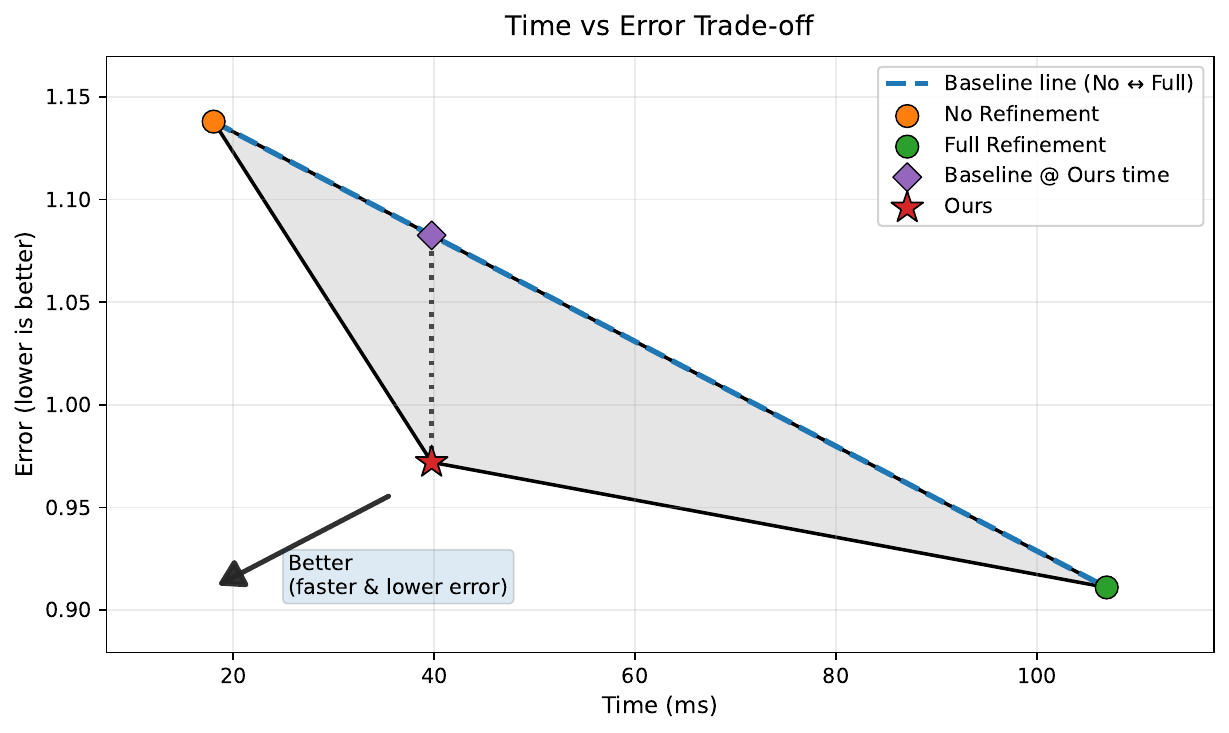}
  \caption{\textbf{Time vs.\ error trade-off under \textsc{Ours-Big}.}
  Each point corresponds to one refinement setting: \emph{No refinement}, \emph{Ours} (activity-based AMR), and \emph{Full refinement}.
  The baseline line interpolates the reference trade-off between \emph{No refinement} and \emph{Full refinement}.
  The gray region indicates configurations that are strictly better than the baseline, and the arrow marks the direction of improvement (faster and lower error).}
  \label{fig:time_vs_error_big}
\end{figure}

To further check whether the improvement is simply due to using more computation, we conduct a compute-matched comparison by adjusting the width or depth of uniform-token baselines so that their MACs are close to MeshTok.
All variants use the same data setting and training protocol, and differ only in how the comparable compute budget is allocated across tokens and backbone capacity.

\begin{table}[t]
\centering
\caption{Compute-matched comparison under the same data setting and training protocol. Width or depth is adjusted so that the MACs are close to MeshTok. Lower relative error is better.}
\label{tab:compute_matched}
\scriptsize
\setlength{\tabcolsep}{3pt}
\renewcommand{\arraystretch}{1.08}
\resizebox{\linewidth}{!}{%
\begin{tabular}{lcccrr}
\toprule
Model & $N$/step & Arch. $(L,d,f;p)$ & MACs & Params (M) & Rel. $\downarrow$ \\
\midrule
Coarse, width increased & 64  & $(8,640,1800;8)$  & $2.650{\times}10^9$ & 47.19 & 1.0294 \\
Coarse, depth increased & 64  & $(14,512,1280;8)$ & $2.760{\times}10^9$ & 47.33 & 1.0652 \\
MeshTok, activity-based     & 112 & $(8,512,1280;8)$  & $2.804{\times}10^9$ & 31.43 & \textbf{0.9723} \\
Full refine, width reduced & 256 & $(8,320,800;16)$ & $2.747{\times}10^9$ & 10.38 & 1.0777 \\
Full refine, depth reduced & 256 & $(4,512,1280;16)$ & $3.355{\times}10^9$ & 13.67 & 1.0632 \\
\bottomrule
\end{tabular}%
}
\end{table}

Table~\ref{tab:compute_matched} shows that MeshTok achieves the lowest relative error under matched-MAC accounting.
The comparison includes stronger coarse-token baselines with increased width or depth, as well as full-refinement baselines whose width or depth is reduced to keep the MACs comparable.
Therefore, the gain is not solely explained by larger token count or backbone compute under this accounting; rather, adaptive tokenization provides a more effective allocation of computation to locally active regions.
This MAC-based comparison is complementary to the wall-clock runtime results, since actual latency also depends on hardware utilization and implementation details.

To complement the trade-off view, we further report the averaged errors and the corresponding runtime measurements under \textsc{Ours-Big}.
The error results are shown in Table~\ref{tab:refine_scaling} and the runtime results are shown in Table~\ref{tab:runtime_mean}, both ordered as \emph{No refinement} $\rightarrow$ \emph{Ours} $\rightarrow$ \emph{Full refinement}.
Consistent with the trade-off curve, partial refinement reduces error compared to the coarse baseline, while avoiding the higher cost of uniformly refining all patches.
Additional analyses and trade-off results under other model scales and refinement ratios are provided in Appendix~\ref{appendix: efficiency}.

\subsection{Additional 3D Experiment}
\label{sec:main_3d_experiment}

To further examine whether the refinement mechanism remains useful beyond 2D fields, we conduct an additional 3D experiment on The Well MHD dataset. This experiment uses 100 trajectories with four physical variables, namely density and the three velocity components. Because this 3D dataset is smaller than our main training sets, all models are trained for 5 epochs with 1,000 iterations per epoch.

In 3D, refining one coarse token produces eight fine tokens. We therefore use a refinement ratio of $k=0.375$, giving the AMR model a sequence length of 232, compared with 512 for full refinement. Since self-attention scales quadratically with sequence length, this corresponds to approximately $(232/512)^2 \approx 20.5\%$ of the dominant attention cost of full refinement.

\begin{table}[H]
\centering
\caption{Additional 3D MHD results. Lower relative error and runtime are better.}
\label{tab:maxwell_3d}
\small
\setlength{\tabcolsep}{4pt}
\renewcommand{\arraystretch}{1.12}
\begin{tabular}{lccc}
\toprule
Metric & No refinement & AMR & Full refinement \\
\midrule
Relative error $\downarrow$ & 39.647 & 35.851 & 29.626 \\
Runtime (ms) $\downarrow$ & 62.17 & 149.91 & 451.67 \\
\bottomrule
\end{tabular}
\end{table}

AMR improves over no refinement in this 3D setting while remaining substantially cheaper than full refinement. Full refinement achieves the lowest error, but AMR is about $3.0\times$ faster in wall-clock runtime and uses less than half the sequence length. These results provide preliminary 3D evidence for the same accuracy--efficiency trend observed in the main 2D experiments.

\subsection{More Ablation Studies}
Due to space constraints, we defer additional ablations to the appendix, including studies on refinement ratios, input resolutions, positional encoding designs, and other training details.
Please refer to Appendix~\ref{ablation studies} for comprehensive results and discussions.

\subsection{Limitations}

The current implementation of MeshTok is designed for structured-grid PDE data, where coarse and fine tokens can be defined by regular spatial subdivisions.
This design makes the refinement budget, token count, and positional encoding straightforward to control, but it also ties the method to Cartesian-style patch hierarchies.
Extending the same adaptive-tokenization principle to unstructured meshes or irregular geometries is therefore nontrivial, since the patch hierarchy, neighborhood relations, and positional encodings are no longer given by a regular grid.
Such settings would likely require graph- or mesh-based backbones, together with refinement rules and geometry-aware encodings designed for irregular connectivity.

This structured two-level design also leaves room for improving the refinement strategy itself.
While our activity-based refinement remains effective across model scales, we observe a reduced marginal gain from adaptivity as model size increases, likely because larger models have stronger capacity to capture fine-scale structures even under coarser tokenization.
Future work could explore scale-aware refinement, multi-level adaptive token hierarchies, or jointly optimizing the refinement policy with model capacity.

\section{Conclusion}
We presented \textbf{MeshTok}, a multi-scale tokenization framework that integrates adaptive mesh refinement (AMR) into Transformer-based PDE foundation models. By replacing uniform patchification with AMR-aware tokenization, MeshTok yields heterogeneous tokens that retain coarse global context while selectively resolving fine-scale structures in physically salient regions, yet remains fully compatible with standard Transformer architectures. It approaches full-refinement accuracy at substantially lower inference cost, exhibiting favorable scaling across model sizes and a practical runtime ``cost attenuation'' effect. These results highlight adaptive resolution as a principled mechanism for allocating computation in data-driven PDE modeling. Future work includes extending tokenization to unstructured and multi-physics settings, and integrating long-horizon stabilization objectives.

% Acknowledgements should only appear in the accepted version.
% \section*{Acknowledgements}

% \textbf{Do not} include acknowledgements in the initial version of the paper
% submitted for blind review.

% If a paper is accepted, the final camera-ready version can (and usually should)
% include acknowledgements.  Such acknowledgements should be placed at the end of
% the section, in an unnumbered section that does not count towards the paper
% page limit. Typically, this will include thanks to reviewers who gave useful
% comments, to colleagues who contributed to the ideas, and to funding agencies
% and corporate sponsors that provided financial support.
\clearpage
\section*{Impact Statement}
\textbf{MeshTok} proposes a multi-scale tokenization framework for Transformer-based PDE modeling, aiming to improve the trade-off between accuracy and efficiency of neural simulators for physical systems. By allocating computation adaptively to spatial regions with higher solution complexity, MeshTok can enable faster inference under fixed compute budgets, potentially benefiting scientific workflows that rely on repeated PDE solves, such as climate and weather modeling, fluid dynamics, energy systems, and materials simulation. Improved efficiency may also reduce the energy cost of training and deploying PDE models at scale, supporting more accessible and sustainable research and engineering applications.

At the same time, faster and more scalable PDE surrogates may lower the barrier to high-fidelity simulation and could be used in high-stakes engineering optimization. In addition, like other data-driven simulators, MeshTok inherits limitations from training data and evaluation benchmarks: models may generalize poorly outside the distribution of observed PDE regimes, potentially producing overconfident predictions in safety-critical settings. These risks motivate careful reporting of training domains, uncertainty estimation where appropriate, and validation against trusted numerical solvers before deployment.

We view MeshTok as a general computational principle---adaptive, multi-scale tokenization---rather than a domain-specific system. Future work may further improve responsible use by integrating reliability measures (e.g., uncertainty-aware refinement, out-of-distribution detection, and activity-based consistency checks), and by documenting intended use cases and failure modes. Overall, we expect MeshTok to have a positive impact by enabling more efficient and scalable PDE modeling, while emphasizing that real-world adoption should follow established verification and safety practices in scientific computing.

\section*{Acknowledgements}
This work was supported in part by the Fundamental and Interdisciplinary Disciplines Breakthrough Plan of the Ministry of Education of China under Grant JYB2025XDXM113, in part by the Natural Science Foundation of Jiangsu Province under Grant BK20240462, and in part by the National Natural Science Foundation of China (NSFC) under Grant 12425113. The authors also acknowledge support from the Key Laboratory of the Ministry of Education for Mathematical Foundations and Applications of Digital Technology at the University of Science and Technology of China.

% In the unusual situation where you want a paper to appear in the
% references without citing it in the main text, use \nocite
% \nocite{langley00}

\bibliography{example_paper}
\bibliographystyle{icml2026}

%%%%%%%%%%%%%%%%%%%%%%%%%%%%%%%%%%%%%%%%%%%%%%%%%%%%%%%%%%%%%%%%%%%%%%%%%%%%%%%
%%%%%%%%%%%%%%%%%%%%%%%%%%%%%%%%%%%%%%%%%%%%%%%%%%%%%%%%%%%%%%%%%%%%%%%%%%%%%%%
% APPENDIX
%%%%%%%%%%%%%%%%%%%%%%%%%%%%%%%%%%%%%%%%%%%%%%%%%%%%%%%%%%%%%%%%%%%%%%%%%%%%%%%
%%%%%%%%%%%%%%%%%%%%%%%%%%%%%%%%%%%%%%%%%%%%%%%%%%%%%%%%%%%%%%%%%%%%%%%%%%%%%%%
\newpage
\appendix
\onecolumn

% =========================
% Theorem 1--3 (LaTeX source)
% Encoder-Transformer-Decoder + AMR tokenization
% =========================

% =========================
% Theorem 1--3 (Nonlinear alignment map R)
% Encoder--Transformer--Decoder + AMR tokenization
% =========================
\section{Theoretical Analysis}
\label{appendix: theoretical analysis}
\subsection{Theoretical Analysis of AMR}
\label{appendix: convergence}
A substantial body of prior work has demonstrated that modern neural architectures---including
convolutional encoder--decoder surrogates and neural-operator/Transformer-based models---can
approximate PDE solution operators with high accuracy when supplied with sufficiently informative
representations of the input fields and geometries.
Representative examples include convolutional surrogate models for parametric PDEs
and uncertainty quantification~\cite{ZHU2018415},
universal approximation results for operator learning~\cite{lu2019deeponet,gonon2025universal,392253},
neural-operator architectures such as Fourier Neural Operators~\cite{DBLP:conf/iclr/LiKALBSA21,JMLR:v24:21-1524},
and attention/Transformer operator learners~\cite{NEURIPS2021_d0921d44,li2022transformer,wang2025cvit,OVADIA2024117109}.

Motivated by these results, our theoretical analysis adopts the following working premise:
\emph{the backbone model family (encoder--Transformer--decoder) is sufficiently expressive on the
range of representations considered}, so that the dominant limitations arise not from inadequate
depth/width, but from the \emph{representation map} that converts raw PDE states into tokens.
Concretely, once a partition $P$ (uniform or adaptive) is fixed, the learnable predictor is restricted
to the realizable class $\mathcal{F}_P=\{D(T(\Phi_P(x)))\}$, where all dependence on the input $x$
is mediated by the tokenization $\Phi_P$.

These results should be viewed as a conditional representation analysis of the tokenization map, rather than as convergence guarantees for the full learning algorithm. Theorem~\ref{theorem1} identifies a token-collision lower bound that arises when $\Phi_P$ discards task-relevant information, independently of Transformer capacity. Theorem~\ref{thm:refinement-inclusion} shows that, under an explicit alignment map and encoder/decoder compatibility assumptions, refinement can enlarge the realizable function class and cannot increase the best uniform approximation error. Theorem~\ref{theorem3} further relates AMR to the budgeted best approximation error under a near-optimal refinement assumption. Accordingly, our proofs focus on the representational consequences of $\Phi_P$ and its induced partition, rather than on universal approximation or training convergence of the Transformer backbone.

\begin{theorem}[Token-collision lower bound under encoder--Transformer--decoder architectures]
\label{theorem1}
Let $\mathcal{X} \subset \mathbb{R}^{d_x}$ be a compact domain and let 
$g : \mathcal{X} \to \mathbb{R}^{d_y}$ be a target function.
For a given patch partition $P$, denote by
$\Phi_P : \mathcal{X} \to \mathbb{R}^{N(P)\times d}$
the associated deterministic tokenization map.
Let $\mathcal{T}_{N,d}$ be a family of Transformer encoders operating on $N$ tokens
of width $d$, and let $\mathcal{D}_{N,d}$ be a family of decoder heads mapping
encoder outputs to $\mathbb{R}^{d_y}$.

Define the realizable function class
\[
\mathcal{F}_P
:= \bigl\{
f(x) = D(T(\Phi_P(x))) \;:\;
T \in \mathcal{T}_{N(P),d},\;
D \in \mathcal{D}_{N(P),d}
\bigr\},
\]
and the best uniform approximation error
\[
\mathcal{E}(P; g)
:= \inf_{f \in \mathcal{F}_P} \sup_{x \in \mathcal{X}} \|f(x) - g(x)\|.
\]

Suppose there exist two distinct points $x_0, x_1 \in \mathcal{X}$ such that
\[
\Phi_P(x_0) = \Phi_P(x_1)
\quad\text{but}\quad
g(x_0) \neq g(x_1).
\]
Then the following lower bound holds:
\[
\mathcal{E}(P; g) \;\ge\; \frac{1}{2}\,\|g(x_0) - g(x_1)\|.
\]
\end{theorem}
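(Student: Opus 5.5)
The argument is a standard two-point (collision) bound: any predictor in $\mathcal{F}_P$ factors through $\Phi_P$, so it cannot separate $x_0$ and $x_1$. A single output cannot be close to two different targets, and the triangle inequality gives the factor $\tfrac12$.

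\emph{Step 1: every $f\in\mathcal{F}_P$ takes equal values at $x_0$ and $x_1$.} Fix an arbitrary $f\in\mathcal{F}_P$, so $f = D\circ T\circ \Phi_P$ with $T\in\mathcal{T}_{N(P),d}$ and $D\in\mathcal{D}_{N(P),d}$. The only property of $T$ and $D$ needed is that they are (deterministic) functions; no capacity or continuity assumption enters. Since $\Phi_P(x_0)=\Phi_P(x_1)$ and $\Phi_P$ is deterministic, we get $f(x_0)=D(T(\Phi_P(x_0)))=D(T(\Phi_P(x_1)))=f(x_1)$. Call this common value $y$.

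\emph{Step 2: two-point lower bound for this $f$.} By the triangle inequality,
\[
\|g(x_0)-g(x_1)\| \le \|g(x_0)-y\| + \|y-g(x_1)\| \le 2\max_{i\in\{0,1\}}\|f(x_i)-g(x_i)\| \le 2\sup_{x\in\mathcal{X}}\|f(x)-g(x)\|.
\]
Hence $\sup_{x}\|f(x)-g(x)\|\ge \tfrac12\|g(x_0)-g(x_1)\|$. The bound holds trivially if the supremum is $+\infty$.

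\emph{Step 3: pass to the infimum.} The right-hand side does not depend on $f$, and $f$ was arbitrary. Taking the infimum over $\mathcal{F}_P$ therefore yields $\mathcal{E}(P;g)\ge \tfrac12\|g(x_0)-g(x_1)\|$, which is strictly positive because $g(x_0)\ne g(x_1)$. If $\mathcal{F}_P$ were empty the infimum would be $+\infty$ and the claim would still hold.

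There is no real obstacle. The one point to be careful about is Step 1: $T$ and $D$ must act only on the token array $\Phi_P(x)$, with no extra channel through which $x$ could enter (for example, positional encodings computed from $x$ itself rather than from $P$). Under the definition of $\mathcal{F}_P$ in the statement, this holds by construction.
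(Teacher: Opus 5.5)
Your proof is correct and follows the same route as the paper's: every $f\in\mathcal{F}_P$ factors through $\Phi_P$ and so satisfies $f(x_0)=f(x_1)$; the triangle inequality at these two points gives the factor $\tfrac12$; the supremum is bounded below by the maximum over $\{x_0,x_1\}$; and you then take the infimum over $\mathcal{F}_P$. Your extra remarks on an empty class, an infinite supremum, and positional encodings that do not depend on $x$ are harmless edge-case bookkeeping that the paper leaves implicit.
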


\begin{proof}
We begin by observing that the tokenization map $\Phi_P$ induces an equivalence
relation on $\mathcal{X}$: two inputs $x, x' \in \mathcal{X}$ are equivalent if
$\Phi_P(x) = \Phi_P(x')$.
All functions $f \in \mathcal{F}_P$ are constant on each such equivalence class,
since they depend on $x$ only through $\Phi_P(x)$.

Fix an arbitrary function $f \in \mathcal{F}_P$.
By definition, there exist $T \in \mathcal{T}_{N(P),d}$ and $D \in \mathcal{D}_{N(P),d}$ such that
\[
f(x) = D(T(\Phi_P(x))).
\]
Because $\Phi_P(x_0) = \Phi_P(x_1)$, we necessarily have
\[
T(\Phi_P(x_0)) = T(\Phi_P(x_1)),
\]
and therefore
\[
f(x_0) = f(x_1) =: c \in \mathbb{R}^{d_y}.
\]

Since $g(x_0) \neq g(x_1)$, any constant value $c$ must incur a nonzero error on at least one of the two points. By the triangle inequality,
\[
\|g(x_0) - g(x_1)\|
\le
\|g(x_0) - c\| + \|c - g(x_1)\|
\le
2 \max\{\|g(x_0) - c\|,\; \|g(x_1) - c\|\}.
\]
Consequently,
\[
\max\{\|f(x_0) - g(x_0)\|,\; \|f(x_1) - g(x_1)\|\}
\;\ge\;
\frac{1}{2}\,\|g(x_0) - g(x_1)\|.
\]

Since the supremum over $\mathcal{X}$ is no smaller than the maximum over
$\{x_0, x_1\}$, we obtain
\[
\sup_{x \in \mathcal{X}} \|f(x) - g(x)\|
\;\ge\;
\frac{1}{2}\,\|g(x_0) - g(x_1)\|.
\]
Finally, taking the infimum over all $f \in \mathcal{F}_P$ yields the claimed
lower bound on $\mathcal{E}(P; g)$.
\end{proof}

\begin{remark}[Patch-induced information bottleneck]
The above theorem formalizes a fundamental expressivity limitation imposed by
the patch partition $P$ ~\cite{1697831}.
In practice, coarse patch tokenization schemes typically retain only low-dimensional
summaries of each patch, such as linear projections, averages, or pooled statistics.
As a result, distinct inputs that differ in fine-scale details may collapse to the
same token representation.
Our coarse--fine dual encoder--decoder is designed to mitigate this issue by extracting richer multi-resolution information during tokenization, so that less task-relevant content is discarded when forming patch-level representations.

If the target function $g$ depends on information discarded by $\Phi_P$, then no
choice of Transformer depth, width, or nonlinearity can compensate for this loss:
the model class $\mathcal{F}_P$ is intrinsically incapable of separating inputs
within the same token equivalence class.
This yields a hard lower bound on the achievable approximation error, determined
solely by the patch partition rather than the capacity of the Transformer itself.

\end{remark}

\begin{theorem}[Refinement implies realizable-class inclusion via a nonlinear alignment map]
\label{theorem2}
\label{thm:refinement-inclusion}
Let $P_1 \preceq P_2$ be two patch partitions, where $P_2$ is a refinement of $P_1$.
Let $\Phi_{P_i}:\mathcal{X}\to \mathbb{R}^{N(P_i)\times d}$ be the corresponding tokenization maps.
Assume there exists a (possibly nonlinear) alignment map
\[
R:\operatorname{Range}(\Phi_{P_2})\to \mathbb{R}^{N(P_1)\times d}
\]
such that
\begin{equation}\label{eq:alignment}
\Phi_{P_1}(x)=R(\Phi_{P_2}(x)),\qquad \forall x\in\mathcal{X}.
\end{equation}

Let $\mathcal{T}_{N,d}$ be a family of Transformer encoders operating on $N$ tokens of width $d$,
and let $\mathcal{D}_{N,d}$ be a family of decoder heads mapping encoder outputs to $\mathbb{R}^{d_y}$.
Define the realizable classes
\[
\mathcal{F}_{P_i}
:=\Bigl\{\, f(x)=D(T(\Phi_{P_i}(x))) \;:\;
T\in\mathcal{T}_{N(P_i),d},\; D\in\mathcal{D}_{N(P_i),d}\Bigr\}.
\]

Because the corresponding function family obtained from the refined division is sufficiently rich compared to the coarse division, assume the following closure/compatibility properties:

\medskip
\noindent{(A1$'$) Transformer closure under precomposition with $R$.}
For every $T_1\in\mathcal{T}_{N(P_1),d}$, there exists $T_2\in\mathcal{T}_{N(P_2),d}$ such that
\[
T_2(z)=T_1(R(z)),\qquad \forall z\in \operatorname{Range}(\Phi_{P_2}).
\]

\noindent{(A2$'$) Decoder compatibility on the realized encoder range.}
For every $D_1\in\mathcal{D}_{N(P_1),d}$, there exists $D_2\in\mathcal{D}_{N(P_2),d}$ such that
\[
D_2(y)=D_1(y),\qquad \forall y\in \operatorname{Range}\bigl(T_1\circ \Phi_{P_1}\bigr).
\]

\medskip
Then the realizable classes satisfy the inclusion
\[
\mathcal{F}_{P_1}\subseteq \mathcal{F}_{P_2}.
\]
Consequently, for any target function $g:\mathcal{X}\to\mathbb{R}^{d_y}$, the optimal uniform approximation error is monotone under refinement:
\[
\mathcal{E}(P_2;g)\le \mathcal{E}(P_1;g),
\quad\text{where}\quad
\mathcal{E}(P;g):=\inf_{f\in\mathcal{F}_P}\sup_{x\in\mathcal{X}}\|f(x)-g(x)\|.
\]
\end{theorem}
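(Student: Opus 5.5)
The inclusion follows by direct construction; the monotonicity of $\mathcal{E}$ is then immediate, because an infimum over a larger set cannot be larger. Fix an arbitrary $f\in\mathcal{F}_{P_1}$, written as $f(x)=D_1(T_1(\Phi_{P_1}(x)))$ with $T_1\in\mathcal{T}_{N(P_1),d}$ and $D_1\in\mathcal{D}_{N(P_1),d}$. I will exhibit $T_2\in\mathcal{T}_{N(P_2),d}$ and $D_2\in\mathcal{D}_{N(P_2),d}$ such that $D_2(T_2(\Phi_{P_2}(x)))=f(x)$ for every $x\in\mathcal{X}$.

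First, apply (A1$'$) to $T_1$ to obtain $T_2$ with $T_2(z)=T_1(R(z))$ for all $z\in\operatorname{Range}(\Phi_{P_2})$. Evaluating at $z=\Phi_{P_2}(x)$ and using the alignment identity \eqref{eq:alignment} gives
\[
T_2(\Phi_{P_2}(x))=T_1(R(\Phi_{P_2}(x)))=T_1(\Phi_{P_1}(x)),\qquad \forall x\in\mathcal{X}.
\]
In particular, $T_2(\Phi_{P_2}(x))$ lies in $\operatorname{Range}(T_1\circ\Phi_{P_1})$.

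Second, apply (A2$'$) to $D_1$ to obtain $D_2$ that agrees with $D_1$ on $\operatorname{Range}(T_1\circ\Phi_{P_1})$. Since the point $T_2(\Phi_{P_2}(x))$ lies in that set, we get $D_2(T_2(\Phi_{P_2}(x)))=D_1(T_1(\Phi_{P_1}(x)))=f(x)$. Hence $f\in\mathcal{F}_{P_2}$, and since $f$ was arbitrary, $\mathcal{F}_{P_1}\subseteq\mathcal{F}_{P_2}$.

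For the error bound, fix $g$. For each $f\in\mathcal{F}_{P_1}$, the quantity $\sup_x\|f(x)-g(x)\|$ is one of the values over which $\mathcal{E}(P_2;g)$ takes its infimum. Therefore $\mathcal{E}(P_2;g)\le\sup_x\|f(x)-g(x)\|$ for every such $f$. Taking the infimum over $f\in\mathcal{F}_{P_1}$ yields $\mathcal{E}(P_2;g)\le\mathcal{E}(P_1;g)$, and this holds even when the values are $+\infty$.

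The only delicate point is domain bookkeeping. (A1$'$) is an identity on $\operatorname{Range}(\Phi_{P_2})$ only, so it can be evaluated only at points of the form $\Phi_{P_2}(x)$. This is why the argument must pass through $\Phi_{P_2}(x)$ rather than an arbitrary token array. Likewise, (A2$'$) is an identity on $\operatorname{Range}(T_1\circ\Phi_{P_1})$ only, so one must check that the intermediate encoder output $T_2(\Phi_{P_2}(x))$ actually lands in that set. The displayed identity from the first step guarantees this. The refinement relation $P_1\preceq P_2$ itself is never used beyond the assumed existence of $R$.
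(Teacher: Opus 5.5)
Your proposal is correct and follows the same route as the paper's proof. You use (A1$'$) together with the alignment identity to get $T_2(\Phi_{P_2}(x))=T_1(\Phi_{P_1}(x))$ pointwise, then use (A2$'$) on $\operatorname{Range}(T_1\circ\Phi_{P_1})$ to swap $D_1$ for $D_2$, and finally note that an infimum over the larger class $\mathcal{F}_{P_2}\supseteq\mathcal{F}_{P_1}$ cannot increase.
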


\begin{proof}
We prove $\mathcal{F}_{P_1}\subseteq \mathcal{F}_{P_2}$ by constructing, for an arbitrary
$f\in \mathcal{F}_{P_1}$, an explicit witness representation of $f$ inside $\mathcal{F}_{P_2}$.

First, fix an arbitrary function in the coarse realizable class.
Let $f\in \mathcal{F}_{P_1}$. By definition, there exist $T_1\in\mathcal{T}_{N(P_1),d}$ and $D_1\in\mathcal{D}_{N(P_1),d}$ such that
\begin{equation}\label{eq:fP1}
f(x)=D_1\!\left(T_1(\Phi_{P_1}(x))\right),\qquad \forall x\in\mathcal{X}.
\end{equation}

Then, by the alignment identity \eqref{eq:alignment}, we can express the coarse token stream as a
deterministic transformation of the refined token stream:
\[
\Phi_{P_1}(x)=R(\Phi_{P_2}(x)).
\]
Substituting this into \eqref{eq:fP1} yields
\begin{equation}\label{eq:rewrite}
f(x)=D_1\!\left(T_1\!\left(R(\Phi_{P_2}(x))\right)\right).
\end{equation}
At this point, the expression is still not in the canonical form $D_2(T_2(\Phi_{P_2}(x)))$,
because the composition $T_1\circ R$ may not itself lie in $\mathcal{T}_{N(P_2),d}$ a priori.

Apply assumption (A1$'$) to $T_1$. It guarantees the existence of some
$T_2\in\mathcal{T}_{N(P_2),d}$ such that
\[
T_2(z)=T_1(R(z)),\qquad \forall z\in\operatorname{Range}(\Phi_{P_2}).
\]
In particular, for every $x\in\mathcal{X}$, we have $\Phi_{P_2}(x)\in\operatorname{Range}(\Phi_{P_2})$,
hence
\begin{equation}\label{eq:T2match}
T_2(\Phi_{P_2}(x))=T_1(R(\Phi_{P_2}(x)))=T_1(\Phi_{P_1}(x)).
\end{equation}

The equality \eqref{eq:T2match} shows that the encoder outputs produced by the refined pipeline
$T_2\circ \Phi_{P_2}$ coincide (pointwise on $\mathcal{X}$) with those produced by the coarse
pipeline $T_1\circ \Phi_{P_1}$.
Now invoke assumption (A2$'$) for the given $D_1$ (and the already fixed $T_1$): there exists
$D_2\in\mathcal{D}_{N(P_2),d}$ such that
\[
D_2(y)=D_1(y),\qquad \forall y\in \operatorname{Range}(T_1\circ \Phi_{P_1}).
\]
Since $T_1(\Phi_{P_1}(x))\in\operatorname{Range}(T_1\circ \Phi_{P_1})$ for all $x\in\mathcal{X}$, we obtain
\begin{equation}\label{eq:D2match}
D_2\!\left(T_1(\Phi_{P_1}(x))\right)=D_1\!\left(T_1(\Phi_{P_1}(x))\right),\qquad \forall x\in\mathcal{X}.
\end{equation}

And combining \eqref{eq:fP1}, \eqref{eq:T2match}, and \eqref{eq:D2match}, for every $x\in\mathcal{X}$,
\[
f(x)
= D_1\!\left(T_1(\Phi_{P_1}(x))\right)
= D_2\!\left(T_1(\Phi_{P_1}(x))\right)
= D_2\!\left(T_2(\Phi_{P_2}(x))\right).
\]
Thus $f$ admits a representation of the form $f(x)=D_2(T_2(\Phi_{P_2}(x)))$ with
$T_2\in\mathcal{T}_{N(P_2),d}$ and $D_2\in\mathcal{D}_{N(P_2),d}$, i.e. $f\in\mathcal{F}_{P_2}$.
Since $f\in\mathcal{F}_{P_1}$ was arbitrary, we have shown $\mathcal{F}_{P_1}\subseteq\mathcal{F}_{P_2}$.

Finally, because $\mathcal{F}_{P_1}\subseteq\mathcal{F}_{P_2}$, the infimum of the same functional
over the larger set cannot be larger:
\[
\inf_{f\in\mathcal{F}_{P_2}}\sup_{x\in\mathcal{X}}\|f(x)-g(x)\|
\;\le\;
\inf_{f\in\mathcal{F}_{P_1}}\sup_{x\in\mathcal{X}}\|f(x)-g(x)\|.
\]
This is exactly $\mathcal{E}(P_2;g)\le \mathcal{E}(P_1;g)$.
\end{proof}

\begin{theorem}[Rationality of AMR partitions under a shared Transformer backbone] \label{theorem3}
Let $P_1$ be a uniform patch partition and let $P_2$ be an adaptive mesh refinement (AMR)
partition constructed by an error indicator.
Assume refinement is applied only to a strict subset of patches and satisfies the budget
constraint
\[
N(P_2) \;\le\; C\,N(P_1),
\]
for some constant $C>1$ depending only on the refinement rule.

Assume that the realizable-class inclusion result of Theorem~\ref{thm:refinement-inclusion}
holds for the pair $(P_1,P_2)$, i.e., there exists a (possibly nonlinear) alignment map $R$
and the encoder/decoder families satisfy assumptions (A1$'$)--(A2$'$).
Assume further that the target function $g:\mathcal{X}\to\mathbb{R}^{d_y}$ exhibits
\emph{nonuniform local complexity}, meaning that its best achievable approximation error
under a patch partition concentrates on a strict subset of spatial regions.
Assume the AMR indicator is equivalent to the local complexity density of $g$ and therefore
refines precisely those high-complexity regions.

Define the budgeted best approximation error at token budget $N(P_2)$ as
\[
\mathcal{E}^\star(N(P_2);g)
\;:=\;
\inf_{\tilde P:\,N(\tilde P)\le N(P_2)} \mathcal{E}(\tilde P;g).
\]
Assume the AMR rule is \emph{budget-near-optimal}, i.e., there exists $C_{\mathrm{qo}}\ge 1$
such that
\[
\mathcal{E}(P_2;g)\;\le\; C_{\mathrm{qo}}\,\mathcal{E}^\star(N(P_2);g).
\]

Then the following statements hold:
\begin{enumerate}
\item $\mathcal{F}_{P_1}\subsetneq \mathcal{F}_{P_2}$;
\item If $C_{\mathrm{qo}}=1$, then
\[
\mathcal{E}(P_2;g)
\;=\;
\mathcal{E}^\star(N(P_2);g)
\;\le\;
\mathcal{E}(P_1;g);
\]
\item More generally, for arbitrary $C_{\mathrm{qo}}\ge 1$,
\[
\mathcal{E}(P_2;g)
\;\le\;
C_{\mathrm{qo}}\,\mathcal{E}^\star(N(P_2);g)
\;\le\;
C_{\mathrm{qo}}\,\mathcal{E}(P_1;g).
\]
\end{enumerate}
\end{theorem}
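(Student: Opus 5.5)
The plan is to obtain the weak inclusion and the error monotonicity directly from Theorem~\ref{thm:refinement-inclusion}. Strictness and the budgeted comparisons then follow from the extra hypotheses of Theorem~\ref{theorem3}. The proof has three steps, one for each item, with the third an immediate consequence of the second.

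\emph{Step 1 (item 1).} Since $R$ and (A1$'$)--(A2$'$) are assumed for the pair $(P_1,P_2)$, Theorem~\ref{thm:refinement-inclusion} gives $\mathcal{F}_{P_1}\subseteq\mathcal{F}_{P_2}$ and $\mathcal{E}(P_2;g)\le\mathcal{E}(P_1;g)$. For strictness I would use Theorem~\ref{theorem1} in its contrapositive form. Refinement acts on a nonempty set of patches. The nonuniform-complexity assumption, together with the indicator matching the complexity density, means these patches carry information that is relevant to $g$ but lost under $P_1$. Concretely, I would argue that there exist $x_0\neq x_1$ with
\[
\Phi_{P_1}(x_0)=\Phi_{P_1}(x_1), \qquad \Phi_{P_2}(x_0)\neq\Phi_{P_2}(x_1).
\]
Such a pair arises from two fields that agree in patch summaries on a refined patch but differ at sub-patch scale. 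Every $f\in\mathcal{F}_{P_1}$ is constant on $\{x_0,x_1\}$, as in the proof of Theorem~\ref{theorem1}. It then suffices to exhibit some $f\in\mathcal{F}_{P_2}$ with $f(x_0)\neq f(x_1)$. Richness of $\mathcal{T}_{N(P_2),d}$ and $\mathcal{D}_{N(P_2),d}$ supplies such an $f$ as long as the families are not degenerate, for instance if they contain a map that separates two distinct token arrays.

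This is the main obstacle. The stated hypotheses do not literally give injectivity gaps or separating capacity, so I would make explicit the reading that ``nonuniform local complexity'' together with the exact indicator implies such a collision pair. I would also assume that the families can separate distinct token arrays, which is the working premise of sufficient backbone expressivity. An alternative route to strictness goes through errors: if $g(x_0)\neq g(x_1)$ and $\mathcal{E}(P_2;g)<\tfrac12\|g(x_0)-g(x_1)\|\le\mathcal{E}(P_1;g)$, then $\mathcal{F}_{P_2}\neq\mathcal{F}_{P_1}$.

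\emph{Step 2 (item 2).} By definition $\mathcal{E}^\star(N(P_2);g)\le\mathcal{E}(P_2;g)$, since $P_2$ is itself admissible with $N(P_2)\le N(P_2)$. With $C_{\mathrm{qo}}=1$ the reverse inequality holds, which gives equality. Next, $N(P_1)\le N(P_2)$ because $P_2$ refines $P_1$, with each split patch replaced by more tokens. Hence $P_1$ is admissible in the infimum, so $\mathcal{E}^\star(N(P_2);g)\le\mathcal{E}(P_1;g)$. The budget constraint $N(P_2)\le CN(P_1)$ is not needed for these inequalities. It only certifies that this comparison happens at a budget comparable to that of $P_1$.

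\emph{Step 3 (item 3).} Chain the near-optimality assumption with the admissibility bound from Step 2 to get
\[
\mathcal{E}(P_2;g)\le C_{\mathrm{qo}}\,\mathcal{E}^\star(N(P_2);g)\le C_{\mathrm{qo}}\,\mathcal{E}(P_1;g).
\]
The sharper bound $\mathcal{E}(P_2;g)\le\mathcal{E}(P_1;g)$ from Step 1 remains available as a complement.
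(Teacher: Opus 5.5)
Your proposal follows the paper's proof essentially step for step: $P_1$ is admissible in the budgeted infimum because $N(P_1)\le N(P_2)$, near-optimality with $C_{\mathrm{qo}}=1$ yields the equality and the chain (you additionally make explicit that $P_2$ itself is admissible, which the paper uses only tacitly), and strictness comes from a pair $x_0,x_1$ identified by $\Phi_{P_1}$ but separated by $\Phi_{P_2}$, combined with an assumed expressivity of the $P_2$ families. The paper makes exactly the interpretive leap you flag for item 1: it asserts the existence of such a pair from the ``nonuniform complexity'' and exact-indicator hypotheses, and it assumes the backbone and decoder can separate the refined tokens. Stating those extra assumptions explicitly, as you do, is if anything more candid than the original.
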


\begin{proof}
We decompose the argument into four conceptual steps.

Since refinement never reduces the token count, we have $N(P_1)\le N(P_2)$.
Therefore, the uniform partition $P_1$ is a feasible candidate in the definition of
$\mathcal{E}^\star(N(P_2);g)$.
By definition of the infimum,
\[
\mathcal{E}^\star(N(P_2);g)
=
\inf_{\tilde P:\,N(\tilde P)\le N(P_2)} \mathcal{E}(\tilde P;g)
\;\le\;
\mathcal{E}(P_1;g).
\]

By the budget-near-optimality assumption, the approximation error achieved by the AMR
partition $P_2$ satisfies
\[
\mathcal{E}(P_2;g)
\;\le\;
C_{\mathrm{qo}}\,\mathcal{E}^\star(N(P_2);g).
\]
If $C_{\mathrm{qo}}=1$, then $P_2$ attains the optimal error among all partitions under the
same token budget.
Combining with Step~1 yields
\[
\mathcal{E}(P_2;g)
=
\mathcal{E}^\star(N(P_2);g)
\;\le\;
\mathcal{E}(P_1;g),
\]
and the general case follows by transitivity of inequalities.

By Theorem~\ref{thm:refinement-inclusion}, we already have
$\mathcal{F}_{P_1}\subseteq \mathcal{F}_{P_2}$.
We now argue that the inclusion is strict.
Because $g$ has nonuniform local complexity and the AMR indicator refines precisely the
high-complexity regions, there exist $x_0,x_1\in\mathcal{X}$ such that
\[
\Phi_{P_1}(x_0)=\Phi_{P_1}(x_1),
\qquad
\Phi_{P_2}(x_0)\neq\Phi_{P_2}(x_1),
\qquad
g(x_0)\neq g(x_1).
\]
Under $P_1$, the pair $(x_0,x_1)$ is indistinguishable at the token level, whereas under $P_2$ is separated by refinement.

By construction, any $f\in\mathcal{F}_{P_1}$ must satisfy $f(x_0)=f(x_1)$, and therefore
incurs a nonzero worst-case error on $\{x_0,x_1\}$.
In contrast, under the refined representation $\Phi_{P_2}$, the shared Transformer backbone and decoder family are assumed expressive enough to distinguish the separated tokens.
Hence there exists $f_2\in\mathcal{F}_{P_2}$ that matches $g$ on $\{x_0,x_1\}$ up to arbitrary
accuracy.
This proves $\mathcal{F}_{P_1}\subsetneq \mathcal{F}_{P_2}$.

A fully refined partition (uniformly refining all patches) would generally achieve a
smaller or equal approximation error than $P_2$, as it enlarges the feasible set in the
definition of $\mathcal{E}^\star$.
However, such \emph{Full refinement} typically requires a token count far exceeding $N(P_2)$ and
therefore incurs prohibitive computational cost.
The AMR partition $P_2$ can thus be interpreted as a budget-aware compromise: it improves
expressivity over $P_1$ while avoiding the unnecessary complexity of global refinement.
\end{proof}

\begin{remark}[Interpretation and practical implication]
The theorem establishes a three-way ordering:
uniform partition $\prec$ AMR partition $\prec$ \emph{Full refinement}.
Under the stated assumptions, AMR strictly improves the approximation capability over uniform patching under the same Transformer backbone, yet remains computationally feasible. Its effectiveness depends critically on how well the refinement rule approximates the budgeted infimum $\mathcal{E}^\star(N;g)$; poor indicators lead to suboptimal partitions, whereas near-optimal indicators yield near-optimal performance. In practice, an \emph{activity-based} rule can be interpreted as a tractable surrogate to this budgeted optimum: it assigns each patch $\mathcal{P}_i$ a score
\[
s_i^{\mathrm{act}}(x)=\frac{1}{|\mathcal{P}_i|}\sum_{u\in\mathcal{P}_i}\psi(x(u)),
\qquad 
\psi(x(u))\in\{\|\nabla x(u)\|_2,\ \|\Delta x(u)\|_2^2,\ \text{or their combination}\},
\]
and refines the top-$k$ patches with the largest $s_i^{\mathrm{act}}$. When $\psi(\cdot)$ correlates with the local approximation difficulty (i.e., higher error density or stronger small-scale variations), this heuristic allocation tends to place the limited refinement budget on regions that contribute most to $\mathcal{E}^\star(N;g)$~\cite{Babuvka1978ErrorEF,AINSWORTH19971,c86db311-7b04-3050-b4da-e7fd733786b3}.

\end{remark}

\subsection{The Efficiency of AMR}
\label{appendix: efficiency}

\paragraph{Setting}
We consider spatiotemporal inputs $x \in \mathbb{R}^{B \times T \times H \times W \times C}$.
Each frame is tokenized by patch embedding, and tokens across $T$ frames are concatenated
into a single sequence on which a shared Transformer backbone performs causal modeling for
next-step prediction. We analyze the training/inference compute in FLOPs up to constant factors.

We focus exclusively on the dominant cost of causal self-attention and ignore all lower-order components
(e.g., patch embedding, feed-forward networks, and decoders).

Let $x \in \mathbb{R}^{B \times T \times H \times W \times C}$ be the input.
After patch tokenization, all tokens across $T$ frames are concatenated into a single sequence,
on which a shared Transformer backbone applies causal self-attention for next-step prediction.

Let $p \times p$ denote the coarse patch size.
The number of coarse tokens per frame is
\[
N_c = \frac{H}{p}\cdot\frac{W}{p} = \frac{HW}{p^2}.
\]
Using $(p/2)\times(p/2)$ patches over the full domain yields $N_f = 4N_c$ tokens per frame.

Under full fine tokenization, the sequence length is
\[
S_f = T N_f = 4 T N_c.
\]
Under adaptive mesh refinement (AMR), a fraction $k \in [0,1]$ of coarse patches is refined,
where each refined coarse patch is replaced by four fine tokens.
The resulting number of tokens per frame is
\[
N_{\mathrm{amr}} = (1-k)N_c + 4kN_c = (1+3k)N_c,
\]
and the sequence length becomes
\[
S_{\mathrm{amr}} = T N_{\mathrm{amr}} = (1+3k) T N_c.
\]

For a sequence of length $S$ and embedding dimension $D$, the computational complexity of causal self-attention scales as
\[
\mathcal{C}_{\mathrm{attn}}(S) = \mathcal{O}(S^2 D).
\]
Therefore, the ratio between the AMR attention cost and the full fine-grid attention cost is
\[
\frac{\mathcal{C}_{\mathrm{attn}}^{\mathrm{amr}}}
     {\mathcal{C}_{\mathrm{attn}}^{\mathrm{fine}}}
=
\frac{S_{\mathrm{amr}}^2}{S_f^2}
=
\frac{(1+3k)^2}{16}.
\]

\paragraph{Example ($k=0.25$)}
When $k = 0.25$, we have $1+3k = 1.75$, and thus
\[
\frac{\mathcal{C}_{\mathrm{attn}}^{\mathrm{amr}}}
     {\mathcal{C}_{\mathrm{attn}}^{\mathrm{fine}}}
=
\frac{1.75^2}{16}
=
\frac{3.0625}{16}
\approx 0.19.
\]
That is, refining only $25\%$ of coarse patches reduces the self-attention computation
to approximately $19\%$ of that required by full fine tokenization, corresponding to more than a $5\times$ reduction in the dominant attention cost.

This result highlights a key advantage of AMR-based tokenization: while full fine tokenization incurs a quadratic blow-up in sequence length,
AMR controls the effective sequence length and yields a quadratic reduction in the dominant self-attention cost as a function of the refinement ratio $k$.

In addition to the above theoretical analysis, the following experiment can also demonstrate that our model can achieve a good balance between accuracy and efficiency.

We evaluate the accuracy--efficiency trade-off of spatial refinement across model scales.
We consider three inference modes in a fixed order throughout this section: \textbf{No refinement} (uniform coarse tokens),
\textbf{Ours} (AMR with 25\% partial refinement),
and \textbf{Full refinement} (uniform refinement everywhere). Accuracy is measured by the averaged relative $\ell_2$ error (Table~\ref{tab:refine_scaling}), while efficiency is measured by the average forward runtime after warmup (Table~\ref{tab:runtime_mean}).

\begin{table}[H]
\centering
\caption{Average forward runtime (ms) for three model scales and refinement strategies.}
\label{tab:runtime_mean}
\small
\begin{tabular}{lccc}
\toprule
Model scale & No refinement & Ours & Full refinement \\
\midrule
Small & 7.17 & 14.72 & 31.93 \\
Big   & 18.04 & 39.74 & 106.93 \\
Large & 33.01 & 64.79 & 170.54 \\
\bottomrule
\end{tabular}
\end{table}

In addition to comparing refinement modes within MeshTok, we also report the mean inference time of MeshTok and representative baseline models under the same GPU environment, AMP setting, and inference protocol.
Each model is run 50 times, and we report the average runtime in milliseconds.

\begin{table}[H]
\centering
\caption{Inference-time comparison against baseline models. All models are evaluated under the same GPU environment, AMP setting, and inference protocol. Each model is run 50 times, and the mean runtime is reported in milliseconds. Lower runtime is better.}
\label{tab:baseline_runtime}
\small
\setlength{\tabcolsep}{5pt}
\renewcommand{\arraystretch}{1.12}
\begin{tabular}{lcccccccc}
\toprule
Model & DeepONet & FNO & ViT & MPP & DPOT & MoE-POT & BCAT & Ours \\
\midrule
Runtime (ms) $\downarrow$
& 44.78 & 30.63 & {7.40} & 90.41 & 17.11 & 50.03 & 17.79 & 39.74 \\
\bottomrule
\end{tabular}
\end{table}

Table~\ref{tab:baseline_runtime} shows that MeshTok is not the fastest model in raw inference time, but it remains competitive with several Transformer-based PDE baselines while providing the accuracy gains reported in the main experiments.
Thus, our runtime results should be interpreted as supporting an accuracy--efficiency trade-off rather than a claim of universally lowest latency.

\begin{figure}[H]
  \centering
  \includegraphics[width=0.32\linewidth]{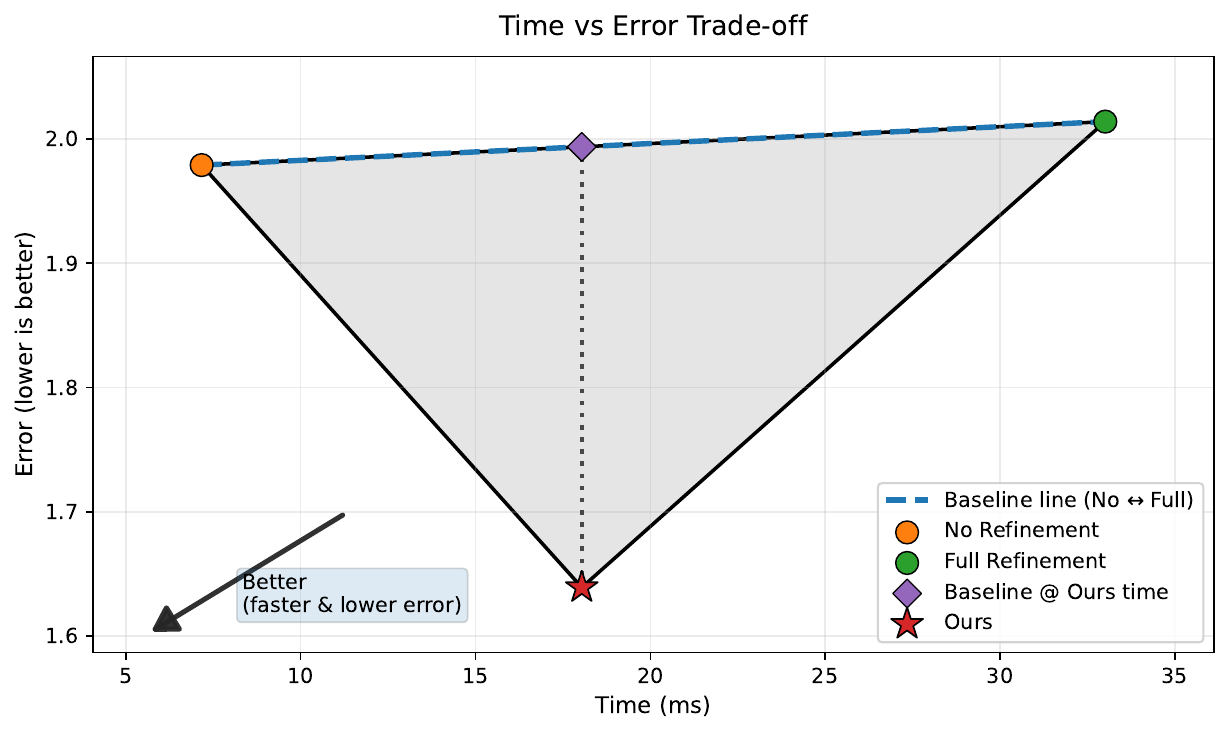}\hfill
  \includegraphics[width=0.32\linewidth]{time_vs_error_big.pdf}\hfill
  \includegraphics[width=0.32\linewidth]{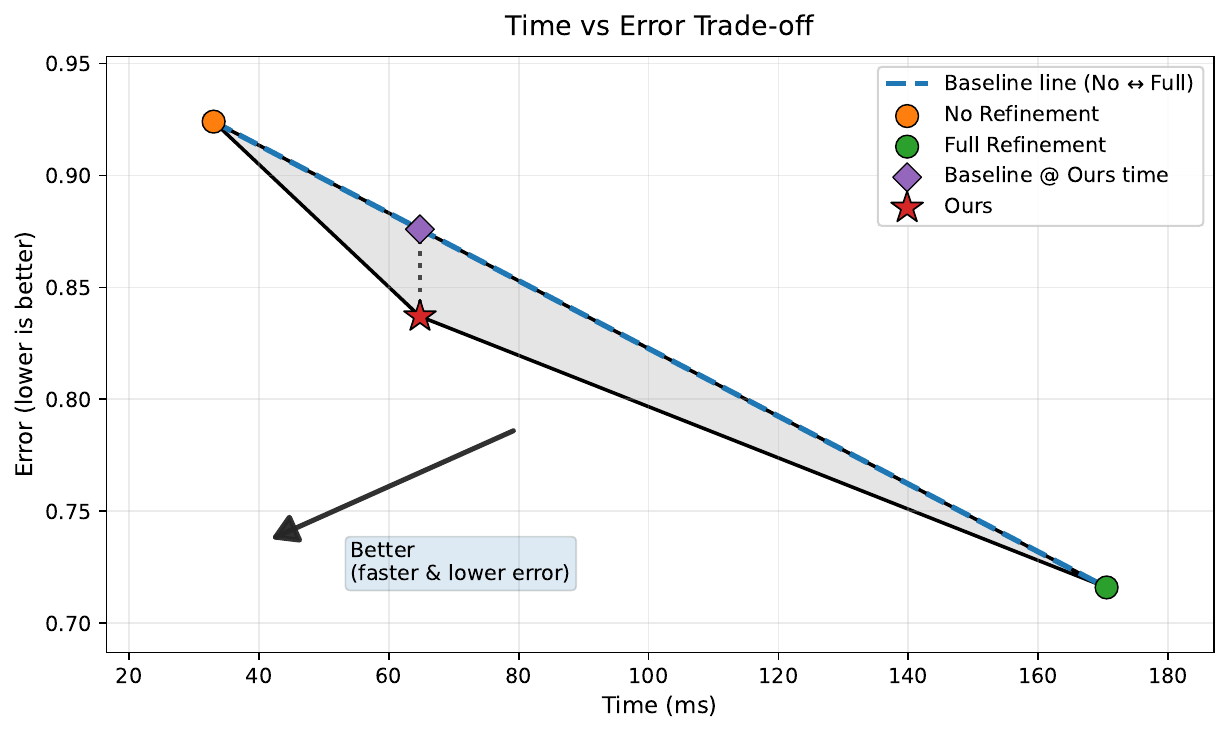}
  \caption{\textbf{Time--error trade-off across model scales.}
  Each subplot reports one-step relative $\ell_2$ error (y-axis, lower is better) versus average forward runtime after warmup (x-axis, in milliseconds).
  Subplots correspond to \textsc{Small} (left), \textsc{Big} (middle), and \textsc{Large} (right).
  Within each subplot, the three operating points are ordered as \emph{No refinement} $\rightarrow$ \emph{Ours} (AMR 25\%) $\rightarrow$ \emph{Full refinement}, where moving right increases compute and moving down improves accuracy.
  The figure illustrates how partial refinement shifts the operating point toward lower error with a moderate runtime increase, while avoiding the substantially larger overhead of uniform \emph{Full refinement}.}
  \label{fig:time_error_tradeoff}
\end{figure}

Table~\ref{tab:runtime_mean} shows that refinement introduces a clear runtime hierarchy across all scales:
\emph{No refinement} is the fastest, \emph{Full refinement} is the slowest, and \emph{Ours (AMR 25\%)} lies in between.
For example, under \textsc{Big}, AMR increases runtime from 18.04\,ms to 39.74\,ms, whereas \emph{Full refinement} costs 106.93\,ms per forward pass.
A similar pattern holds for \textsc{Small} (7.17$\rightarrow$14.72$\rightarrow$31.93\,ms) and \textsc{Large} (33.01$\rightarrow$64.79$\rightarrow$170.54\,ms), indicating that partial refinement provides a moderate cost increase relative to the coarse baseline while remaining substantially cheaper than uniform \emph{Full refinement}.

To directly connect runtime with accuracy, Figure~\ref{fig:time_error_tradeoff} visualizes the operating points of the three modes for each model scale, where the $x$-axis is the average forward runtime and the $y$-axis is the one-step relative $\ell_2$ error (lower is better).
Each subplot corresponds to one model scale (\textsc{Small}, \textsc{Big}, \textsc{Large}), and each subplot contains three points ordered as
\emph{No refinement} $\rightarrow$ \emph{Ours} $\rightarrow$ \emph{Full refinement}.
Thus, moving rightward indicates increased compute, while moving downward indicates improved accuracy.

Across all three scales, \emph{Ours} consistently shifts the operating point downward compared to \emph{No refinement}, demonstrating that partial refinement improves single-step accuracy with a moderate runtime increase.
At the same time, \emph{Ours} remains much closer to \emph{No refinement} than to \emph{Full refinement} in runtime, reflecting a more cost-efficient alternative to uniformly refining all patches.

The three subplots also reveal scale-dependent behavior.
For \textsc{Small}, \emph{Ours} improves accuracy over \emph{No refinement} while avoiding the large overhead of \emph{Full refinement}, and \emph{Full refinement} is not always the best operating point in terms of one-step error under limited capacity (Table~\ref{tab:refine_scaling}).
For \textsc{Big}, \emph{Full refinement} achieves the lowest one-step error, but at a substantially higher runtime, whereas \emph{Ours} provides a balanced point with reduced error relative to the coarse baseline and significantly lower cost than \emph{Full refinement}.
For \textsc{Large}, the gap between \emph{Ours} and \emph{Full refinement} in error becomes smaller, while the runtime gap remains large, suggesting that partial refinement remains attractive even when the backbone capacity is strong.

\section{Datasets}
\subsection{Mathematical Formulations of the Datasets}
\label{Mathematical Forms of Datasets}
We use four PDE benchmarks covering fluid dynamics, geophysical flows, and reaction--diffusion systems. For each dataset, we specify the quantity of interest (QoI), spatiotemporal domain, and the governing equations.

\begin{itemize}
  \item \textbf{PDEBench: CNS$(\eta,\zeta)$~\cite{takamoto2022pdebench}.}
  The QoI consists of the 2D compressible flow fields: velocity $\mathbf{u}(x,t)\in\mathbb{R}^2$, density $\rho(x,t)$, and pressure $p(x,t)$ over $(x,t)\in[0,1]^2\times[0,T]$.
  Let $\eta$ denote the dynamic shear viscosity and $\zeta$ the bulk viscosity.
  The governing compressible Navier--Stokes equations are
  \begin{align}
    \partial_t \rho + \nabla\cdot(\rho \mathbf{u}) &= 0, \\
    \rho\big(\partial_t \mathbf{u} + \mathbf{u}\cdot\nabla \mathbf{u}\big)
    &= -\nabla p + \eta \Delta \mathbf{u} + \Big(\zeta + \tfrac{\eta}{3}\Big)\nabla(\nabla\cdot \mathbf{u}), \\
    \partial_t E + \nabla\cdot\big((E+p)\mathbf{u}\big) &= \nabla\cdot(\boldsymbol{\tau}\mathbf{u}),
  \end{align}
  where $E=\rho e + \tfrac{1}{2}\rho\|\mathbf{u}\|_2^2$ is the total energy density, $e$ is the internal energy per unit mass, and
  $\boldsymbol{\tau}= \eta\big(\nabla \mathbf{u}+(\nabla \mathbf{u})^\top - \tfrac{2}{3}(\nabla\cdot\mathbf{u})\mathbf{I}\big)+\zeta(\nabla\cdot\mathbf{u})\mathbf{I}$ is the viscous stress tensor.
  In our experiments, we follow the parameter settings and train/test splits from the original benchmark.

  \item \textbf{PDEBench: Shallow Water~\cite{takamoto2022pdebench}.}
  The QoI is the 2D shallow-water state $(h,\mathbf{v})$, where $h(x,t)$ is the fluid height and $\mathbf{v}(x,t)\in\mathbb{R}^2$ is the depth-averaged horizontal velocity on $(x,t)\in[0,1]^2\times[0,T]$.
  The equations take the standard conservative form
  \begin{align}
    \partial_t h + \nabla\cdot(h\mathbf{v}) &= 0, \\
    \partial_t(h\mathbf{v}) + \nabla\cdot\!\Big(h\mathbf{v}\otimes\mathbf{v} + \tfrac{1}{2}g h^2 \mathbf{I}\Big)
    &= \mathbf{S}(x,t),
  \end{align}
  where $g$ is the gravitational constant and $\mathbf{S}$ denotes external forcing and/or dissipation terms as specified by the dataset configuration.

    \item \textbf{PDEBench: Reaction--Diffusion~\cite{takamoto2022pdebench}.}
    The QoI consists of two coupled scalar fields $u(x,y,t)$ and $v(x,y,t)$ defined over a two-dimensional spatial domain $(x,y)\in[0,1]^2$ and time $t\in[0, T]$. 
    The system is governed by a nonlinear reaction--diffusion process of the form
    \begin{align}
        \partial_t u &= D_u \Delta u + \mathcal{R}_u(u,v), \label{eq:pdebench_rd_u_generic}\\
        \partial_t v &= D_v \Delta v + \mathcal{R}_v(u,v), \label{eq:pdebench_rd_v_generic}
    \end{align}
    where $D_u$ and $D_v$ are diffusion coefficients, and $\mathcal{R}_u(\cdot)$ and $\mathcal{R}_v(\cdot)$ denote nonlinear reaction terms governing local interactions between the two species. 
    This dataset exhibits rich spatiotemporal pattern formation driven by the interplay between diffusion and nonlinear reactions, including the emergence of localized structures and sharp interfaces. 
    The coexistence of smooth regions and fine-scale features makes this benchmark particularly challenging for learning models that must capture multi-scale spatial dynamics.

  \item \textbf{The Well: Gray--Scott~\cite{ohana2024well}.}
  The QoI is a two-species concentration field $(u(x,t),v(x,t))$ over $(x,t)\in[0,1]^2\times[0,T]$.
  The Gray--Scott reaction--diffusion system is
  \begin{align}
    \partial_t u &= D_u \Delta u - uv^2 + F(1-u), \\
    \partial_t v &= D_v \Delta v + uv^2 - (F+k)v,
  \end{align}
  where $D_u,D_v$ are diffusion coefficients, $F$ is the feed rate, and $k$ is the kill rate. This dataset is characterized by emergent fine-scale patterns and long-horizon dynamics.

  \item \textbf{The Well: shear flow~\cite{ohana2024well}}
    The QoI includes a 2D periodic incompressible velocity field $\mathbf{u}(t,x,z)=(u_x,u_z)$, pressure $p(t,x,z)$, and a passive tracer $s(t,x,z)$.
    The dynamics are governed by the incompressible Navier--Stokes equations
    \begin{align}
        \partial_t \mathbf{u} - \nu \Delta \mathbf{u} + \nabla p &= -\mathbf{u}\cdot\nabla \mathbf{u}, \label{eq:well_shear_ns}
    \end{align}
    together with an advection--diffusion equation for the tracer
    \begin{align}
        \partial_t s - D \Delta s &= -\mathbf{u}\cdot\nabla s, \label{eq:well_shear_tracer}
    \end{align}
    where $\nu$ is the kinematic viscosity and $D$ is the tracer diffusivity. 
    The system is parameterized by Reynolds and Schmidt numbers through $\nu=\mathrm{Re}^{-1}$ and $D=(\mathrm{Re}\cdot\mathrm{Sc})^{-1}$, and often exhibits instabilities (e.g., Kelvin--Helmholtz), vortex formation, and potential turbulent transition, requiring high resolution to capture fine-scale structures. 

  \item \textbf{PDENNEval: Allen--Cahn~\cite{ijcai2024p573}.}
  The QoI is a scalar order parameter $u(x,t)$ on $(x,t)\in\Omega\times[0,T]$ (typically $\Omega\subset\mathbb{R}^2$ in the benchmark).
  The Allen--Cahn equation is
  \begin{align}
    \partial_t u = \epsilon^2 \Delta u - \big(u^3 - u\big),
  \end{align}
  where $\epsilon>0$ controls the interface width. The solutions exhibit moving interfaces and sharp transition layers, providing a stringent test for adaptive-resolution modeling.

  \item \textbf{PDENNEval: 2D Burgers~\cite{ijcai2024p573}.}
The QoI is a two-component velocity field $\mathbf{u}(x,y,t)=(u(x,y,t),v(x,y,t))$ over $(x,y,t)\in[0,1]^2\times[0,T]$. 
The 2D viscous Burgers system is
\begin{align}
\partial_t u + u\,\partial_x u + v\,\partial_y u &= \nu\left(\partial_{xx}u+\partial_{yy}u\right), \label{eq:burgers2d_u}\\
\partial_t v + u\,\partial_x v + v\,\partial_y v &= \nu\left(\partial_{xx}v+\partial_{yy}v\right), \label{eq:burgers2d_v}
\end{align}
where $\nu$ is the viscosity (diffusion) coefficient. In PDENNEval, the Bu2 dataset is generated using a first-order upwind scheme for spatial first-order derivatives and a central difference scheme for second-order derivatives. 
This dataset is characterized by strong nonlinearity and shock-like steep gradients for small $\nu$, making it a challenging benchmark for learning long-horizon dynamics with sharp transients.

      \item \textbf{PDENNEval: Black--Scholes--Barenblatt Equation~\cite{ijcai2024p573}.}
    The Black--Scholes--Barenblatt equation is a nonlinear parabolic PDE arising in uncertain volatility modeling, defined as
    \begin{equation}
        \partial_t u + \frac{1}{2}\sigma^2(x)\,\partial_{xx}u
        + r x\,\partial_x u - r u = 0,
        \label{eq:bsb}
    \end{equation}
    where $u(x,t)$ denotes the option price, $r$ is the risk-free interest rate, and $\sigma(x)$ is a state-dependent volatility function. This dataset features nonlinear diffusion effects and solution regimes with varying smoothness, serving as a challenging benchmark for general-purpose PDE learning models.
\end{itemize}

\subsection{Data Processing}
\label{The method of processing data}
All PDE datasets used in pretraining are transformed into a unified spatial resolution of $128\times128$, which is consistent with the majority of benchmarks considered in this work. For datasets with lower native resolution, we apply interpolation to upsample the solutions to $128\times128$, while higher-resolution data are downsampled via uniform subsampling. This normalization enables joint training across heterogeneous datasets drawn from PDEBench~\cite{takamoto2022pdebench}, PDENNEval~\cite{ijcai2024p573}, and The Well~\cite{ohana2024well}.

To accommodate varying numbers of physical variables across different equations, we pad the channel dimension of all states to four channels, noting that nearly all PDE systems in the considered benchmarks have at most four state variables. Channel-wise masking is applied to ensure that padded dimensions do not contribute to the loss.

To improve generalization and mitigate the accumulation of errors during autoregressive rollout, we inject additive noise into the training data. The noise magnitude is scaled proportionally to the data amplitude, with a standard deviation set to $1\%$ of the corresponding signal magnitude, following common practice in data-driven PDE modeling.

Finally, to ensure balanced learning across different equation types, all PDE systems are sampled with equal probability during training, preventing domination by any single dataset or equation family.

\section{The Details of the Model}
\label{model detail}
\paragraph{Ours}
We consider three model scales---\textit{Small}, \textit{Medium}, and \textit{Large}---to study the impact of model capacity and spatial resolution. 
All models share the same overall architecture and differ only in depth, embedding dimension, feed-forward width, and patch resolution.

\begin{itemize}
    \item \textbf{Small model.}
    The \textsc{Small} configuration uses a lightweight Transformer with $4$ layers, attention dimension $d=256$, and MLP dimension $d_{\mathrm{ff}}=512$.
    We use $8$ attention heads and a fixed spatial patch size of $16\times16$, corresponding to an $8\times8$ coarse token grid for $128\times128$ inputs.
    This model serves as the low-capacity setting in our scaling study.

    \item \textbf{Big model.}
    The \textsc{Big} configuration increases the backbone capacity to $8$ layers with attention dimension $d=512$ and MLP dimension $d_{\mathrm{ff}}=1280$, while keeping the number of heads fixed at $8$.
    The spatial tokenization remains unchanged with the same $16\times16$ patch size (i.e., an $8\times8$ token grid), so performance differences primarily reflect the effect of scaling the Transformer backbone.

    \item \textbf{Large model.}
    The \textsc{Large} configuration further scales the Transformer to $12$ layers with attention dimension $d=768$ and MLP dimension $d_{\mathrm{ff}}=1600$, again using $8$ attention heads.
    We keep the same $16\times16$ patch size for spatial tokenization, ensuring that comparisons across scales isolate capacity changes in the Transformer.
    This is the highest-capacity configuration evaluated in our experiments.
\end{itemize}

Across all scales, we use $8$ attention heads, RMS normalization, SwiGLU activation, convolutional patch embedding, and learnable time embeddings, ensuring that performance differences primarily arise from model scale rather than architectural changes. In total, the specific configuration is shown in the table~\ref{tab:model_sizes}.
\begin{table}[H]
\centering
\caption{Model configurations at different scales. }
\label{tab:model_sizes}
\begin{tabular}{lcccccc}
\toprule
\textbf{Size} 
& \textbf{Attention dim} 
& \textbf{MLP dim} 
& \textbf{Layers} 
& \textbf{Heads} 
& \textbf{Patch size} \\
\midrule
Small & 256  & 512  & 4  & 8 & $16\times16$ \\
Big   & 512  & 1280 & 8  & 8 & $16\times16$ \\
Large & 768 & 1600 & 12 & 8 & $16\times16$ \\
\bottomrule
\end{tabular}
\end{table}

\paragraph{Baseline Configurations}
All baseline models are configured to have model capacity comparable to our \textit{Big} configuration, ensuring a fair comparison in terms of parameter count and representational power.
Unless otherwise specified, architectural choices and hyperparameters follow standard implementations for each baseline, with key dimensions adjusted to closely match our Big model.

\begin{itemize}

    \item \textbf{DeepONet}
    The DeepONet baseline employs a dual-branch operator learning architecture composed of a branch network and a trunk network. 
    The branch network uses a basis dimension of $256$ to encode input function samples, while the trunk network uses an embedding dimension of $512$ to represent spatial coordinates. 
    The model adopts a two-branch design (i.e., \texttt{singlebranch = false}) and maps inputs defined on an $16\times16$ patch grid to a single output patch.

    \item \textbf{FNO}
    The Fourier Neural Operator baseline adopts a spectral convolution-based operator learning architecture. 
    It retains $8$ Fourier modes along each spatial dimension (height, width, and depth) and uses a hidden channel width of $64$. 
    The model consists of $8$ stacked spectral convolution layers and operates directly on the full spatial grid, without relying on patch-based tokenization.

    \item \textbf{ViT}
    The Vision Transformer baseline follows a standard encoder-only Transformer architecture. 
    It uses an embedding dimension of $512$ and a feed-forward network dimension of $2048$. 
    The input field is tokenized into $16\times16$ non-overlapping patches, resulting in a fixed-length token sequence for Transformer processing. 
    The same patch resolution is used for both input and output representations.

    \item \textbf{DPOT}
DPOT adopts a patch-based operator Transformer with Fourier-domain mixing. We use a patch size of $16 \times 16$ on $128 \times 128$ inputs, resulting in an $8 \times 8$ latent grid. The model employs an embedding dimension of $512$ and a depth of $8$ Transformer-style blocks. Each block consists of AFNO-based spectral mixing followed by a feed-forward module. Temporal dependencies are handled via a dedicated temporal aggregation layer, and the number of retained Fourier modes is set to $32$.

\item \textbf{MPP}
MPP is instantiated using an axial space--time Transformer architecture.
For a $128\times128$ input field, we tokenize the spatial domain into non-overlapping patches of size $16\times16$, yielding an $8\times8$ patch (token) grid per frame.
Each patch is mapped to an embedding of dimension $512$.
The model stacks $8$ space--time processing blocks, where each block alternates between temporal self-attention and axial spatial attention along horizontal and vertical directions.
A lightweight MLP is used for channel-wise mixing, and relative or continuous positional biases are applied within attention modules.

\item \textbf{MoE-POT}
MoE-POT extends operator Transformers with mixture-of-experts feed-forward layers. We configure the model with an embedding dimension of $512$, a depth of $8$ blocks, and $16$ experts per MoE layer, with top-$k$ routing enabled. The patch resolution is $16 \times 16$, consistent with other baselines. Spectral mixing is implemented via AFNO modules, while expert routing dynamically allocates computation based on global feature statistics.

\item \textbf{BCAT}
BCAT is a Transformer-based operator learning framework that introduces several architectural innovations tailored for spatiotemporal PDE modeling. 
In our experiments, BCAT is configured with an $16 \times 16$ patch resolution, an embedding dimension of $512$, and a depth of $8$ Transformer blocks, yielding a model capacity comparable to our \textit{Big} configuration. 
\end{itemize}

\section{Training and Evaluation Settings}
\label{training setting}
\paragraph{Training Settings}
Autoregressive training has been widely used in language and image generation, where each prediction is conditioned on previously generated or observed context~\cite{10.5555/3008751.3008881,Brown2020LanguageMA,Oord2016PixelRN}. We transferred it to the PDE field. We adopt a unified training protocol for all experiments to ensure stable autoregressive learning and fair comparison across PDE benchmarks. 
Each training example is a spatiotemporal trajectory
\[
x \in \mathbb{R}^{T \times H \times W \times C},
\]
where $T$ is the number of time steps, and $H\times W$ denotes the spatial resolution with $C$ physical channels. 
We use an input history length of $T_{\mathrm{in}}=10$ (\texttt{input\_len}=10) and train the model in an autoregressive next-step prediction manner.

\paragraph{Autoregressive Training Objective}
Let $f_\theta$ denote the model with parameters $\theta$. 
Given an input history window $\mathbf{x}_{t-T_{\mathrm{in}}+1:t}=\{x^{t-T_{\mathrm{in}}+1},\dots,x^{t}\}$, the one-step prediction is defined as
\[
\hat{x}^{t+1} = f_\theta\!\big(x^{t-T_{\mathrm{in}}+1},\,x^{t-T_{\mathrm{in}}+2},\,\dots,\,x^{t}\big).
\]
During training, the loss is computed only on the predicted portion of the sequence (\texttt{loss\_start\_idx}=$T_{\mathrm{in}}$), i.e., we do not penalize the observed history. 
Using mean-squared error in the normalized space, the per-step training loss is
\[
\mathcal{L}_{t+1}
= \Big\lVert \hat{x}^{t+1} - x^{t+1} \Big\rVert_2^2,
\]
where $\|\cdot\|_2$ denotes the Euclidean norm over all spatial locations and channels.

\paragraph{Normalization}
To reduce scale variation across PDE families and stabilize optimization, we apply mean--variance normalization (\texttt{normalize}=\texttt{meanvar})~\cite{ba2016layer}. 
Let $\mu$ and $\sigma$ denote the dataset (or batch) mean and standard deviation. 
We normalize inputs as
\[
\tilde{x} = \frac{x-\mu}{\sigma},
\]
and compute the training objective in the normalized space (\texttt{denormalize\_for\_loss}=0). 
This improves numerical conditioning and prevents a small subset of variables from dominating the loss.

\paragraph{Noise Augmentation}
We augment training trajectories with \emph{relative} additive noise of strength $\delta=0.01$ (\texttt{noise}=0.01, \texttt{noise\_type}=\texttt{additive}). 
Unlike using a fixed absolute noise scale, our perturbation magnitude is proportional to the norm of the underlying field, making the augmentation scale-invariant across PDEs with different physical units and amplitudes. 
Concretely, for each state $x^{t}\in\mathbb{R}^{H\times W\times C}$, we construct a noisy input
\[
\tilde{x}^{t} = x^{t} + \delta \,\|x^{t}\|_2 \cdot \xi^{t},
\qquad
\xi^{t} \sim \mathcal{N}(0,\mathbf{I}),
\]
where $\|\cdot\|_2$ denotes the Euclidean norm over all spatial locations and channels, and $\xi^{t}$ is i.i.d.\ Gaussian noise with unit variance. 
This relative perturbation improves robustness to distribution shifts across equations and regimes, and empirically enhances rollout stability by exposing the model to slightly off-manifold states during training, thereby partially mitigating compounding errors in long-horizon autoregressive inference~\cite{6796505,10.5555/2627435.2670313,10.1145/1390156.1390294,Neelakantan2015AddingGN}.

\paragraph{Optimizer}
We train with AdamW (decoupled weight decay)~\cite{Loshchilov2017FixingWD} using learning rate $\eta_0=10^{-4}$, weight decay $10^{-4}$, and numerical stabilization $\epsilon=10^{-6}$. 
We set $\beta_1=0.9$ and $\beta_2=0.95$ (\texttt{beta2}=0.95) and disable AMSGrad (\texttt{amsgrad}=\texttt{false}). 
The choice $\beta_1=0.9$ provides stable first-moment smoothing under stochastic gradients, while a moderately smaller $\beta_2$ (relative to the common $0.999$) makes the second-moment estimate more responsive to non-stationary gradient statistics frequently observed in autoregressive sequence learning and multi-dataset PDE training, thereby improving adaptation without sacrificing stability. 
We additionally apply gradient clipping with a global norm threshold of $1.0$ (\texttt{clip\_grad\_norm}=1.0) to prevent occasional gradient spikes.

\paragraph{Cosine Schedule with Warmup}
We train for $E=20$ epochs with $S=4000$ optimization steps per epoch, yielding the total number of optimizer updates
\[
I_{\max} = E \cdot S = 20 \times 4000 = 80000,
\]
which corresponds to \texttt{max\_iters}. 
We use a cosine learning-rate schedule (\texttt{scheduler\_type}=\texttt{cosine}) with linear warmup occupying a fraction $w=0.1$ of all iterations (\texttt{warmup}=0.1)~\cite{Loshchilov2016SGDRSG,Goyal2017AccurateLM}. 
Let
\[
I_{\mathrm{w}} = \big\lfloor w I_{\max}\big\rfloor
\]
be the warmup length, and let $i\in\{0,1,\dots,I_{\max}\}$ denote the global optimization step. 
The learning rate $\eta(i)$ is defined by the following piecewise function:
\[
\eta(i)=
\begin{cases}
\eta_0 \cdot \dfrac{i}{I_{\mathrm{w}}}, & 0 \le i \le I_{\mathrm{w}}, \\[10pt]
\eta_0 \cdot \dfrac{1}{2}\left(1+\cos\!\Big(\pi \dfrac{i-I_{\mathrm{w}}}{I_{\max}-I_{\mathrm{w}}}\Big)\right), 
& I_{\mathrm{w}} < i \le I_{\max}.
\end{cases}
\]
Warmup avoids unstable updates before the moment estimates become reliable, while cosine decay smoothly anneals the learning rate for convergence.

\paragraph{Mixed Precision and Batch Size}
We use a batch size of $8$ (\texttt{batch\_size}=8) and enable automatic mixed precision (\texttt{amp}=1) to accelerate training and reduce memory footprint, while preserving stable optimization dynamics.

\paragraph{Autoregressive Rollout Evaluation}
Beyond one-step prediction, we evaluate models under closed-loop autoregressive rollout to assess long-horizon stability. 
Let the normalized $\ell_2$ error at horizon $h$ be
\[
\mathrm{Err}_{h}
= \frac{\left\lVert \hat{x}^{t+h} - x^{t+h} \right\rVert_{2}}{\left\lVert x^{t+h} \right\rVert_{2}}.
\]
We focus on two representative horizons, $h=1$ and $h=10$, which respectively reflect (i) short-horizon fitting ability and (ii) robustness to compounding errors under rollout. 
For $h=1$, $\hat{x}^{t+1}$ is computed from the ground-truth history as defined above. 
For $h>1$, predictions are recursively fed back as inputs. Specifically, for $h\ge 2$ we define
\[
\hat{x}^{t+h} = f_\theta\!\big(\hat{x}^{t+h-T_{\mathrm{in}}},\,\hat{x}^{t+h-T_{\mathrm{in}}+1},\,\dots,\,\hat{x}^{t+h-1}\big),
\]
where the first $T_{\mathrm{in}}$ states are initialized using ground-truth observations, and subsequent states are model predictions. 
This recursion makes explicit the distribution shift from teacher-forced training to free-running inference, and directly measures how prediction errors accumulate over time.

For completeness, we also report an integrated rollout error over a horizon window of length $H$ (corresponding to \texttt{\_l2\_error\_int}),
\[
\mathrm{Err}_{\mathrm{int}}
= \frac{1}{H}\sum_{h=1}^{H}\frac{\left\lVert \hat{x}^{t+h} - x^{t+h} \right\rVert_{2}}{\left\lVert x^{t+h} \right\rVert_{2}}.
\]
Model selection uses validation $\ell_2$ error (\texttt{validation\_metrics}=\texttt{\_l2\_error}), and we report both $\mathrm{Err}_{1}$ and $\mathrm{Err}_{10}$ to characterize predictive accuracy and rollout stability.

\paragraph{Equation-Specific Fine-Tuning}
For downstream equation-specific adaptation, we fine-tune the pretrained model for $E=5$ epochs with $S=4000$ optimization steps per epoch, resulting in a total of
\[
I_{\max}=E\cdot S=5\times 4000=20000
\]
parameter updates. All other training settings are kept identical to pretraining (normalization, relative noise augmentation, batch size, mixed precision, and gradient clipping) to isolate the effect of initialization and fine-tuning itself. We optimize with AdamW using learning rate $\eta_0=10^{-4}$, weight decay $10^{-4}$, $\epsilon=10^{-6}$, and momentum parameters $(\beta_1,\beta_2)=(0.9,0.95)$. The learning rate follows a cosine schedule with linear warmup ratio $w=0.1$, where the warmup length is
\[
I_{\mathrm{w}}=\lfloor w I_{\max}\rfloor=\lfloor 0.1\times 20000\rfloor=2000,
\]
and for global step $i\in\{0,\dots,I_{\max}\}$ the learning rate is
\[
\eta(i)=
\begin{cases}
\eta_0\cdot \dfrac{i}{I_{\mathrm{w}}}, & 0\le i\le I_{\mathrm{w}},\\[8pt]
\eta_0\cdot \dfrac{1}{2}\!\left(1+\cos\!\Big(\pi\dfrac{i-I_{\mathrm{w}}}{I_{\max}-I_{\mathrm{w}}}\Big)\right), & I_{\mathrm{w}}<i\le I_{\max}.
\end{cases}
\]
We evaluate both short-horizon fitting ability and long-horizon stability using rollout errors at horizons $h\in\{1,10\}$,
\[
\mathrm{Err}_{h}=\frac{\|\hat{x}^{t+h}-x^{t+h}\|_2}{\|x^{t+h}\|_2},
\]
where $\hat{x}^{t+1}=f_\theta(x^{t-9},\dots,x^{t})$ and multi-step predictions are obtained recursively by feeding back model outputs. During fine-tuning, we compare two initialization strategies: training \emph{from scratch} with random initialization versus \emph{warm-start fine-tuning} initialized from the final pretrained checkpoint, quantifying the benefit of foundation-model pretraining for downstream PDE adaptation.

\section{Architecture Exploration: Alternative Token Refinement Mechanisms}
\label{different architecture for refinement}
Our main architecture performs adaptive multi-scale modeling by directly mixing coarse and refined tokens in a \emph{single} block-causal Transformer, enabling cross-resolution interactions throughout all layers. There are also other ways to handle multi-scale tokens~\cite{9711309,chen2023cf,wang2021pyramid,NEURIPS2022_fcfad93e,guibas2021adaptive}. To better understand the design space of refinement mechanisms, we additionally investigate two alternative schemes: (i) a Transformer-in-Transformer refinement inspired by TNT, and (ii) a physics-aware token formation strategy inspired by Transolver. In both explorations, we keep the same refinement indicator and enforce block-causal self-attention along the temporal dimension.

% ------------------------------------------------------------
\subsection{TNT-Style Local Refinement with Global Feature Injection}
\paragraph{Motivation and Formulation}
Transformer-in-Transformer (TNT)~\cite{NEURIPS2021_854d9fca} is originally introduced to address a key limitation of patch-based Transformers: when the backbone operates on relatively coarse patches, fine-scale patterns within a patch may be under-represented, since the global Transformer primarily models interactions \emph{between} patch tokens rather than \emph{within} a patch. TNT resolves this by allocating dedicated modeling capacity to local structures through a secondary Transformer operating on sub-patch tokens, and then injecting the resulting local representation into the corresponding coarse token before global processing. This design is conceptually aligned with AMR: refined regions naturally deserve additional computation to capture sharp gradients, vortical patterns, or stiff reaction fronts.

\paragraph{Model Design}
We adapt TNT to our PDE forecasting setup with the same activity-based refinement policy used in MeshTok. Starting from a coarse partition, the indicator selects a subset of coarse patches that require refinement. For each selected coarse patch, we further subdivide it into $m$ sub-patches and embed them as local tokens $\{\mathbf{u}_{i,j}\}_{j=1}^{m}$. A lightweight \emph{local Transformer} $\mathrm{LocalTrans}(\cdot)$ then computes a refined summary feature $\mathbf{r}_i$ that captures high-frequency behaviors and intra-patch dependencies:
\[
\mathbf{r}_i = \mathrm{LocalTrans}\!\left(\{\mathbf{u}_{i,j}\}_{j=1}^{m}\right).
\]
The refined feature is fused back into the coarse token $\mathbf{z}_i$ via a feature-injection operator, e.g., additive residual or concatenation followed by projection:
\[
\tilde{\mathbf{z}}_i = \mathrm{Fuse}\!\left(\mathbf{z}_i, \mathbf{r}_i\right).
\]
All coarse tokens (refined or not) form a global token sequence that is processed by a \emph{global} Transformer backbone with block-causal temporal attention:
\[
\widetilde{\mathbf{Z}} = \mathrm{GlobalTrans}_{\mathrm{BC}}\!\left(\{\tilde{\mathbf{z}}_i\}_i\right).
\]
Here, block-causality enforces that tokens at future time steps attend only to tokens from the past, while spatial attention remains fully dense within the same time step. Notably, this TNT-style hierarchy introduces a two-stage information flow (local encoding $\rightarrow$ injection $\rightarrow$ global modeling), which provides a strong baseline since refined patches are guaranteed to receive additional local modeling capacity before global integration.

\paragraph{Comparison Protocol}
We compare TNT-style refinement against our unified AMR token mixing under identical data splits, training schedules, and evaluation metrics. All models use the same indicator selection and the same block-causal temporal attention constraint. Performance is measured via normalized rollout $\ell_2$ errors at horizons $h\in\{1,5,10\}$ (lower is better), which is shown in Table~\ref{tab:tnt_ablation}.

\begin{table}[t]
\centering
\caption{Averaged relative $\ell_2$ rollout error for the TNT-style refinement variant (lower is better).}
\label{tab:tnt_ablation}
\begin{tabular}{lccc}
\toprule
\textbf{Model} & \textbf{1-step} & \textbf{5-step} & \textbf{10-step} \\
\midrule
TNT-style & 1.050 & 1.708 & 2.461 \\
Ours      & 0.972 & 1.663 & 2.261 \\
\bottomrule
\end{tabular}
\end{table}

\paragraph{Analysis}
The TNT-style baseline yields reasonably competitive results, which is expected given its explicit allocation of local refinement capacity for selected patches. However, our approach remains consistently better across short and long horizons. We attribute this to the fact that TNT enforces a hierarchical separation between local and global processing: refined features are first compressed by a local Transformer and then injected into coarse tokens, after which the global backbone models only the coarse-level token sequence. This compression-and-injection step can bottleneck fine-scale information and introduce representational mismatch between local summaries and global token semantics. In contrast, our model directly places coarse and refined tokens into a single unified sequence and allows cross-resolution attention at \emph{every} layer, enabling fine-scale details to interact with global context without an explicit bottleneck. Importantly, our unified design achieves higher accuracy with less architectural overhead, since it avoids maintaining a dedicated local Transformer and instead leverages shared capacity for both local and global reasoning.

% ------------------------------------------------------------
\subsection{Transolver-Style Physics-Aware Token Formation via Soft Clustering}
\paragraph{Motivation and Formulation}
Transolver~\cite{wu2024Transolver} advocates a physics-inspired attention mechanism that does not rely on rigid, axis-aligned patch partitioning. The key observation is that physically meaningful structures in PDE solutions (e.g., shocks, shear layers, mixing interfaces) may not align with a uniform patch grid. Instead of treating each spatial patch as a token, Transolver forms tokens through a \emph{soft clustering} process over feature space, where points (or small cells) with similar physical states are aggregated into latent tokens. This tokenization is potentially appealing for AMR-style representations, as AMR also produces non-uniform discretizations with heterogeneous cell sizes and refinement depths. We therefore hypothesize that physics-aware grouping might complement non-uniform token layouts beyond purely geometric patching.

\paragraph{Model Design}
We instantiate a Transolver-style token formation module under our forecasting setting. Given pointwise features $\{\mathbf{h}_n\}_{n=1}^{N}$ extracted from the input field, we compute soft assignment weights to $K$ latent prototypes $\{\mathbf{c}_k\}_{k=1}^{K}$:
\[
\alpha_{n,k} = \mathrm{SoftAssign}(\mathbf{h}_n, \mathbf{c}_k),
\qquad
\sum_{k=1}^{K}\alpha_{n,k}=1,\ \alpha_{n,k}\ge 0.
\]
Latent tokens are then aggregated by weighted pooling:
\[
\mathbf{v}_k = \sum_{n=1}^{N} \alpha_{n,k}\,\mathbf{h}_n.
\]
The resulting token set $\{\mathbf{v}_k\}_{k=1}^{K}$ is fed into a Transformer backbone with block-causal temporal attention for autoregressive prediction. Crucially, to ensure fairness, we do \emph{not} allow Transolver to gain an advantage simply by changing the refinement source: all tokens originate from the same physics-aware refinement setting used by our method (rather than uniform spatial patches), and the difference lies only in whether the tokens are kept in a structured multi-scale layout (\emph{Ours}) or re-formed by soft clustering (Transolver-style).

\paragraph{Comparison Protocol}
We compare this Transolver-style token formation against our unified MeshTok under identical optimization settings and rollout metrics. Evaluation uses the same normalized rollout $\ell_2$ errors at horizons $h\in\{1,5,10\}$, which is shown in Table~\ref{tab:transolver_ablation}
\begin{table}[t]
\centering
\caption{Averaged relative $\ell_2$ rollout error for the Transolver-style token formation variant (lower is better).}
\label{tab:transolver_ablation}
\begin{tabular}{lccc}
\toprule
\textbf{Model} & \textbf{1-step} & \textbf{5-step} & \textbf{10-step} \\
\midrule
Transolver-style & 5.273 & 7.353 & 8.099 \\
Ours             & 0.972 & 1.663 & 2.261 \\
\bottomrule
\end{tabular}
\end{table}

\paragraph{Analysis}
The Transolver-style performs substantially worse in our setting.
A plausible explanation is a mismatch between soft clustering and the refinement-aware multi-scale tokenization prior.
Our method constructs tokens with explicit geometric and hierarchical semantics: each token corresponds to a fixed spatial region with a well-defined refinement depth, and the sequence ordering preserves locality and scale in a structured manner.
Such a prior is particularly useful for PDE forecasting, where long-horizon rollout benefits from consistent spatiotemporal coupling and predictable information propagation across scales.
In contrast, soft clustering introduces an adaptive \emph{soft assignment} that aggregates features across spatial locations based on instantaneous similarity.
While flexible, this data-dependent mixing can reduce the locality and scale consistency implied by the AMR partition, and may blur sharp interfaces or localized structures.
As a result, the learned aggregation may interfere with the intended refinement-aware positional and multi-scale representation, which can complicate optimization and reduce long-horizon stability.
Therefore, although physics-aware clustering is appealing in general, in our AMR-guided setting, it may not align with the structured discretization prior already encoded by the coarse--fine tokenization.

\section{More Ablation Studies}
\label{ablation studies}

\subsection{Seed Robustness for Refinement Scaling}
\label{sec:seed_robustness}

To assess whether the scaling comparison is sensitive to random initialization, we repeat the main refinement scaling study with three independent seeds while keeping the data splits, model configurations, and training protocol unchanged.

\begin{table}[H]
\centering
\caption{Three-seed robustness study for the main refinement scaling comparison. We report mean $\pm$ standard deviation of relative $\ell_2$ error. Lower is better.}
\label{tab:seed_robustness}
\small
\setlength{\tabcolsep}{5pt}
\renewcommand{\arraystretch}{1.12}
\begin{tabular}{lccc}
\toprule
Scale & No refinement & AMR & Full refinement \\
\midrule
Small & $2.013 \pm 0.031$ & $\mathbf{1.631 \pm 0.009}$ & $1.933 \pm 0.071$ \\
Big   & $1.122 \pm 0.015$ & $0.977 \pm 0.007$ & $\mathbf{0.914 \pm 0.009}$ \\
Large & $0.917 \pm 0.007$ & $0.842 \pm 0.006$ & $\mathbf{0.727 \pm 0.016}$ \\
\bottomrule
\end{tabular}
\end{table}

Across all three model scales in this robustness study, AMR consistently improves over no refinement with small standard deviations across seeds.
Full refinement achieves the lowest error for the \textsc{Big} and \textsc{Large} configurations, but it uses a substantially larger token sequence and higher runtime, as discussed in the main scaling and efficiency results.
These results support the conclusion that the observed AMR gains over coarse tokenization are not caused by a single favorable random seed.

\subsection{Indicator-Guided Refinement and Refinement Policies}

MeshTok allocates extra spatial resolution via \emph{indicator-guided patch refinement}: under a fixed refinement budget, only a subset of coarse patches is refined into finer sub-patches.
In this ablation, we compare four refinement mechanisms that differ in how they select refined regions: \textbf{activity-based} refinement, \textbf{random} refinement, \textbf{a posteriori error-estimation} refinement, and \textbf{end-to-end} refinement.
All mechanisms refine the same number of coarse patches and use the same downstream predictor; only the refinement policy is changed.

Let $x_t \in \mathbb{R}^{H \times W \times C}$ denote the PDE field at time $t$.
We partition the domain into $N$ non-overlapping coarse patches $\{\mathcal{P}_i\}_{i=1}^N$ of size $p \times p$ (we use $p=8$), and top-$k$ selection is performed independently per sample (and per time step), enforcing exactly $k$ refined patches for each frame.
Each policy produces a refinement mask $m_t \in \{0,1\}^N$ satisfying $\sum_{i=1}^N m_t(i)=k$, where $m_t(i)=1$ indicates that patch $i$ is refined. 

\textbf{Activity-Based Refinement}
Activity-based refinement assigns each patch a physics-motivated score derived directly from the current field.
A typical instantiation aggregates local gradients and energy within each coarse patch:
\[
s_i^{\text{act}}(x_t)
= \frac{1}{|\mathcal{P}_i|}\sum_{u \in \mathcal{P}_i}
\Big( \|\nabla x_t(u)\|_2 + \lambda \|\Delta x_t(u)\|_2^2 \Big),
\]
where $\nabla$ denotes discrete spatial gradients and $\lambda$ balances the two terms.
Refinement is triggered by selecting the top-$k$ patches with the largest $s_i^{\text{act}}$.

\textbf{Random refinement.}
Random refinement selects $k$ patches uniformly at random and refines them, without using any solution-dependent information.
While simple, it can be interpreted as a strong stochastic augmentation of the tokenization process, since the predictor is trained under diverse refinement patterns.

\textbf{A posteriori error-estimation refinement.}
A posteriori refinement selects patches according to their estimated refinement benefit. This variant is used only as an ablation to study indicator quality: it does not use the true future solution error at inference time, since such information is unavailable during deployment and would constitute an oracle signal. To construct supervision for a lightweight indicator, we compare two predictors evaluated on the same input field $x_t$:
a coarse-only predictor $f_{\text{coarse}}$ (\emph{No refinement}) and a fully refined predictor $f_{\text{full}}$ (all patches refined).
Let $\ell_{\text{coarse}}(u,c)$ and $\ell_{\text{fine}}(u,c)$ denote the per-location, per-channel loss contributions at spatial location $u$ and variable channel $c$ (we use relative $\ell_2$ loss in our experiments).
We define a non-negative improvement signal
\[
\Delta\ell(u,c)=\max\!\big(\ell_{\text{coarse}}(u,c)-\ell_{\text{fine}}(u,c),\,0\big),
\]
and reduce it across channels to obtain a scalar improvement field,
\[
\Delta\ell_s(u)=\operatorname{Reduce}_c\big(\Delta\ell(u,c)\big),
\]
where $\operatorname{Reduce}_c$ is implemented as either mean or sum.
For each coarse patch $\mathcal{P}_i$, we compute the patch-level improvement score by averaging within the patch,
\[
s_i^{\text{post}}(x_t)=\frac{1}{|\mathcal{P}_i|}\sum_{u\in\mathcal{P}_i}\Delta\ell_s(u).
\]
Given a refinement budget $k$, we convert these scores into binary refinement labels via top-$k$ selection:
\[
y_i(x_t)=\mathbf{1}\Big(i\in \mathrm{TopK}(\{s_j^{\text{post}}(x_t)\}_{j=1}^N,\,k)\Big)\in\{0,1\}.
\]

We then train a lightweight indicator network $g_\phi$ (a 3-layer U-Net with hidden width 32) to predict this binary refinement mask from $x_t$ using a classification objective.
Specifically, we minimize the binary cross-entropy loss
\[
\phi^\star=\arg\min_{\phi}\;
\mathbb{E}\Big[\mathrm{BCE}\big(\sigma(g_\phi(x_t)),\, y(x_t)\big)\Big],
\]
where $\sigma(\cdot)$ denotes the sigmoid function.
We train $g_\phi$ for 10{,}000 iterations with batch size 64, and then freeze $g_{\phi^\star}$ for refinement.
During refinement, the indicator requires only a single forward pass to produce refinement scores, and the final refined set is obtained by selecting the top-$k$ predicted patches.
Therefore, the two-predictor comparison is only used offline to generate training labels for the indicator and is not required once the indicator has been trained. We do not use the true solution error itself as an inference-time refinement signal, since it is unavailable during deployment and would constitute an oracle indicator.

\textbf{End-to-end refinement.}
End-to-end refinement learns the indicator jointly with the downstream predictor via the final prediction loss, treating refinement as part of the overall network.
In practice, we find this training procedure unstable: the learned indicator often collapses to nearly input-independent refinement maps and yields limited adaptivity.
We therefore report this degenerated end-to-end variant as \emph{Model (fixed)} in Table~\ref{tab:refine_strategy}.

To ensure a fair comparison, we keep the predictor architecture, refinement budget $k$, optimizer, and training schedule fixed, and vary only the refinement policy.
We report the averaged relative $\ell_2$ error for \textbf{1-step} prediction and \textbf{10-step} rollout, where the 10-step rollout is obtained by iteratively feeding the model's own predictions back as inputs.
Errors are averaged over evaluation trajectories and time steps, producing the summary in Table~\ref{tab:refine_strategy}.

\begin{table}[H]
\centering
\caption{Averaged relative $\ell_2$ error (lower is better) under different refinement strategies.}
\label{tab:refine_strategy}
\small
\setlength{\tabcolsep}{4pt}
\begin{tabular}{lcccc}
\toprule
 & \multicolumn{4}{c}{Refinement strategy} \\
\cmidrule(lr){2-5}
Horizon & Random & Model (fixed) & Activity-based & A posteriori-based \\
\midrule
1-step  & 1.080 & 1.068 & 0.972 & \textbf{0.967} \\
10-step & 2.311 & 2.521 & \textbf{2.261} & 2.272 \\
\bottomrule
\end{tabular}
\end{table}

Table~\ref{tab:refine_strategy} shows that both activity-based and a posteriori-based refinement improve short-horizon accuracy over random refinement (0.972/0.967 vs.\ 1.080 at 1-step). For the 10-step rollout, activity-based refinement attains the lowest error (2.261), while a posteriori-based refinement remains close (2.272), and random refinement is higher (2.311). Table~\ref{tab:refine_strategy} also reports \emph{Model (fixed)} to quantify the degenerated behavior of the end-to-end learned indicator. Although this variant obtains a moderate 1-step error of 1.068, its 10-step rollout error rises to 2.521. This gap indicates that a nearly input-independent refinement pattern is brittle under autoregressive dynamics: as high-activity regions move over time, stale refinement locations can lead to accumulating errors.

Overall, activity-based and a posteriori-based refinement both provide strong accuracy under the same refinement budget, with a posteriori-based refinement being slightly better for 1-step prediction and activity-based refinement being slightly better for 10-step rollout. The main practical difference is complexity and runtime: activity-based refinement is simple to compute from $x_t$, whereas a posteriori refinement requires training and deploying an additional indicator network, and its end-to-end runtime in our implementation reaches roughly 70\% of the cost of \emph{Full refinement}.
Based on this trade-off between accuracy and efficiency, we adopt activity-based refinement as the default policy in the main experiments.

\subsection{Refinement Ratio Study}
We further study the refinement ratio $k$ through the lens of classical adaptive mesh refinement (AMR).
In numerical PDE solvers, refinement can be interpreted as allocating additional degrees of freedom to reduce \emph{local truncation error}, thereby better resolving fine-scale structures such as sharp gradients, thin shear layers, or stiff reaction fronts.
In our formulation, $k\in\{0.125,0.25,0.375\}$ specifies the fraction of coarse patches refined at each time step.
Concretely, given per-patch scores produced by the indicator, we perform a top-$k$ selection independently for each sample (and each time step), enforcing that exactly $k$ of the coarse patches are refined in every frame.

Operationally, refining a patch replaces one coarse token with a small set of fine tokens, increasing the local spatial sampling density and expanding the sequence length.
If the coarse token grid contains $N$ patches and each refined patch is split into $s$ fine tokens (e.g., $s=4$ for a $2\times2$ split), then the sequence length grows from $N$ to
\[
L(k)\;\approx\; N + k N (s-1),
\]
which scales approximately linearly with $k$.
Therefore, $k$ provides a controlled knob that trades computation for accuracy by directing additional modeling capacity to the regions predicted to be most difficult.

From a numerical analysis viewpoint, coarse discretizations incur larger local truncation errors when the solution contains high-frequency content, e.g., shocks, sharp interfaces, or rapidly varying patterns.
Under-resolution in such regions can manifest as smoothing, phase errors, or aliasing artifacts, and these local inaccuracies are often the dominant contributors to global prediction error.
Analogously, our refinement increases the effective local resolution: fine tokens represent smaller spatial regions and thus can preserve sharper spatial variations with less information loss at tokenization time.
This typically improves one-step forecasting by reducing local modeling bias in the most challenging regions.

More importantly, refinement can have a larger impact on long-horizon autoregressive rollout.
Even if the one-step error is moderate, rollout repeatedly feeds predictions back as inputs, so local errors can propagate and accumulate through the underlying dynamics.
This aligns with classical error-propagation intuition in time-marching schemes, where the horizon-$h$ error reflects both the instantaneous approximation error and its amplification by the evolution operator.
As a result, refining the primary ``error-producing'' regions can yield a disproportionately large improvement in multi-step stability.

To quantify this effect, we train the same model under three refinement ratios $k\in\{0.125,0.25,0.375\}$ while keeping all other settings fixed (data splits, optimization hyperparameters, and training budget).
For evaluation, we compute the relative $\ell_2$ error for (i) one-step prediction and (ii) $10$-step autoregressive rollout, and then average the errors over all benchmark datasets.
The resulting averaged errors are summarized in Table~\ref{tab:refine_ratio} (lower is better).

\begin{table}[H]
\centering
\caption{Averaged relative $\ell_2$ rollout error under different refinement ratios $k$ (lower is better).}
\label{tab:refine_ratio}
\small
\setlength{\tabcolsep}{7pt}
\renewcommand{\arraystretch}{1.15}
\begin{tabular}{lccc}
\toprule
\textbf{Horizon} & \multicolumn{3}{c}{\textbf{Refinement ratio} $k$} \\
\cmidrule(lr){2-4}
& 0.125 & {0.25} & 0.375 \\
\midrule
\textbf{1-step}  & 1.026 & {0.972} & 0.963 \\
\textbf{10-step} & 2.476 & {2.261} & 2.254 \\
\bottomrule
\end{tabular}
\end{table}

Table~\ref{tab:refine_ratio} shows that increasing $k$ consistently improves both 1-step and 10-step accuracy.
For one-step prediction, the error decreases from $1.026$ at $k=0.125$ to $0.972$ at $k=0.25$ and further to $0.963$ at $k=0.375$.
For the 10-step rollout, the reduction is larger in magnitude, dropping from $2.476$ to $2.261$ and then to $2.254$ as $k$ increases.
These results suggest that a higher refinement ratio helps suppress local errors in difficult regions and reduces their accumulation over long horizons.

Although $k=0.375$ achieves the lowest errors, we adopt $k=0.25$ as the default setting due to its accuracy--efficiency balance.
A larger $k$ increases the token count and thus the attention cost, which grows super-linearly with sequence length.
Empirically, $k=0.25$ already captures most of the accuracy gains observed at $k=0.375$, while avoiding the additional overhead of more aggressive refinement.
Unless otherwise specified, we therefore use $k=0.25$ in the main experiments.

\subsection{Resolution Generalization}
\label{sec:resolution_ablation}

Most experiments in this work are conducted at a unified spatial resolution of $128\times128$ to enable joint training and fair comparison across heterogeneous PDE benchmarks.
To further examine whether our refinement-aware tokenization remains effective when the spatial resolution increases, we perform an additional ablation at $256\times256$.

All methods in this ablation use the same \textsc{Big} configuration, and differ only in the refinement mode.
We construct $256\times256$ inputs by applying bilinear interpolation to upsample the original $128\times128$ fields, and keep all other evaluation settings unchanged.
We compare three refinement modes:
\emph{(i) No refinement}, which uses only coarse tokens with a uniform coarse partition;
\emph{(ii) Ours (AMR 25\%)}, which refines a fixed fraction of patches selected by the indicator at each time step; and
\emph{(iii) Full refinement}, which uniformly applies the finest tokenization across the entire domain.
Performance is measured by relative $\ell_2$ error (lower is better), and we additionally report an overall \textbf{Avg.} score obtained by averaging the errors across the five PDEs.

\begin{table}[H]
\centering
\caption{Relative $\ell_2$ error at $256\times256$ under different refinement modes (all methods use the \textsc{Big} configuration). Lower is better.}
\label{tab:resolution_256}
\small
\setlength{\tabcolsep}{6pt}
\renewcommand{\arraystretch}{1.15}
\begin{tabular}{lcccccc}
\toprule
\textbf{Refinement mode} & Gray--Scott & Allen--Cahn & $\mathrm{CNS}(1.0,0.01)$ & $\mathrm{CNS}(0.1,0.01)$ & SWE & \textbf{Avg.} \\
\midrule
No refinement   & 2.104 & 1.313 & 1.299 & 0.245 & 0.408 & 1.086 \\
Ours (AMR 25\%) & 2.010 & 1.133 & 1.174 & 0.244 & 0.250 & 0.962 \\
Full refinement & 1.687 & 1.012 & 1.017 & 0.204 & 0.282 & 0.840 \\
\bottomrule
\end{tabular}
\end{table}

Table~\ref{tab:resolution_256} reports the $256\times256$ results on five PDE benchmarks: Gray--Scott, Allen--Cahn, $\mathrm{CNS}(1.0,0.01)$, $\mathrm{CNS}(0.1,0.01)$, and SWE.
Across all equations, our AMR-based refinement reduces error compared with the no-refinement baseline, and the average error decreases from $1.086$ to $0.962$.
For instance, on Gray--Scott and Allen--Cahn, refinement improves the error from $2.104$ to $2.010$ and from $1.313$ to $1.133$, respectively.
On $\mathrm{CNS}(1.0,0.01)$, the error decreases from $1.299$ to $1.174$, while on SWE the error reduces from $0.408$ to $0.250$.
For $\mathrm{CNS}(0.1,0.01)$, the improvement is marginal ($0.245$ vs.\ $0.244$), suggesting that this setting may be less sensitive to refinement under the same budget.

Full refinement attains the lowest average error ($0.840$), as expected from allocating the highest uniform resolution across the domain.
Nevertheless, AMR with a 25\% refinement ratio provides a favorable trade-off: it achieves a substantial fraction of the accuracy gain of \emph{Full refinement} while refining only a subset of patches.
Overall, these results indicate that refinement-aware tokenization transfers reliably to higher spatial resolutions under the same Transformer backbone capacity.

We also evaluate the same setting at $512\times512$ using bilinearly upsampled inputs. The purpose of this experiment is to test whether the refinement mechanism remains beneficial at a higher spatial resolution under the same controlled protocol, rather than to claim a fully optimized high-resolution training setup.

\begin{table}[H]
\centering
\caption{Relative $\ell_2$ error at $512\times512$ using bilinearly upsampled data under the \textsc{Big} configuration. Lower is better.}
\label{tab:resolution_512}
\small
\setlength{\tabcolsep}{4pt}
\renewcommand{\arraystretch}{1.15}
\begin{tabular}{lccccc}
\toprule
\textbf{Refinement mode} & Gray--Scott & Allen--Cahn & $\mathrm{CNS}(1.0,0.01)$ & $\mathrm{CNS}(0.1,0.01)$ & SWE \\
\midrule
No refinement   & 2.430 & 1.524 & 1.270 & 0.255 & 0.457 \\
Ours (AMR 25\%) & 2.190 & 1.323 & 1.146 & 0.235 & {0.258} \\
Full refinement & {1.681} & {1.098} & {1.042} & {0.212} & 0.283 \\
\bottomrule
\end{tabular}
\end{table}

Table~\ref{tab:resolution_512} shows that AMR continues to improve over no refinement at $512\times512$ on all five benchmarks.
However, full refinement achieves lower error on most equations, which is expected because it allocates the finest tokenization uniformly across the entire domain.
This controlled setup uses bilinear upsampling from the original $128\times128$ fields, which can smooth local activity patterns and weaken the signal used by the activity-based indicator.
Therefore, we interpret the $512\times512$ results as evidence that AMR remains beneficial over coarse tokenization at higher resolution, while also showing that high-resolution refinement and indicator design remain important directions for future work.

\subsection{Formal Definitions of Positional Encoding Schemes}
Let $\mathcal{P}_{\mathrm{c}}=\{1,\dots, N_{\mathrm{c}}\}$ denote the index set of coarse patches (tokens) on the standard uniform grid, and let $\mathcal{P}_{\mathrm{f}}$ denote the index set of refined tokens produced by quadtree subdivision. 
Each refined token corresponds to a pair $(i,q)$, where $i\in\mathcal{P}_{\mathrm{c}}$ is its parent coarse patch and $q\in\{1,2,3,4\}$ denotes the quadrant (leaf id) within the refined coarse patch.

For each token $\tau$ (either coarse or refined), we associate its patch-center coordinate $\mathbf{p}_{\tau}=(x_\tau,y_\tau)\in[0,1]^2$ and its refinement depth $d_\tau\in\{0,1\}$ (0 for coarse, 1 for refined). 
Let $\mathbf{z}_\tau\in\mathbb{R}^{D}$ be the input token embedding produced by the patch encoder (before adding positional information).

\paragraph{(1) Learnable PE with Quadtree Offsets}
We introduce a learnable embedding table for coarse patches,
\[
\mathbf{E}_{\mathrm{c}}\in\mathbb{R}^{N_{\mathrm{c}}\times D},
\qquad 
\mathbf{e}^{\mathrm{c}}_{i}=\mathbf{E}_{\mathrm{c}}[i]\in\mathbb{R}^{D},
\]
and a learnable embedding table for quadtree leaf offsets,
\[
\mathbf{E}_{\mathrm{leaf}}\in\mathbb{R}^{4\times D},
\qquad 
\mathbf{e}^{\mathrm{leaf}}_{q}=\mathbf{E}_{\mathrm{leaf}}[q]\in\mathbb{R}^{D}.
\]
The positional embedding is defined as
\[
\mathbf{e}_{\tau}=
\begin{cases}
\mathbf{e}^{\mathrm{c}}_{i}, & \tau=i\in\mathcal{P}_{\mathrm{c}},\\[4pt]
\mathbf{e}^{\mathrm{c}}_{i}+\mathbf{e}^{\mathrm{leaf}}_{q}, & \tau=(i,q)\in\mathcal{P}_{\mathrm{f}}.
\end{cases}
\]
Finally, we inject it by simple addition at the Transformer input:
\[
\mathbf{h}^{0}_{\tau}=\mathbf{z}_{\tau}+\mathbf{e}_{\tau},
\qquad \forall \tau\in \mathcal{P}_{\mathrm{c}}\cup\mathcal{P}_{\mathrm{f}}.
\]
This construction distinguishes coarse vs.\ refined tokens but does not explicitly encode continuous geometry.

\paragraph{(2) Sinusoidal PE over (Pos + Depth) with Dimension Splitting.}
We define a fixed sinusoidal feature map $\Phi_{\sin}(\cdot\,;\,D'):\mathbb{R}\to\mathbb{R}^{D'}$ with frequencies $\{\omega_m\}_{m=0}^{\lfloor D'/2\rfloor-1}$:
\[
\Phi_{\sin}(s;D')=
\big[
\sin(\omega_0 s),\cos(\omega_0 s),
\dots,
\sin(\omega_{M-1} s),\cos(\omega_{M-1} s)
\big],\quad M=\left\lfloor\frac{D'}{2}\right\rfloor.
\]
Given the embedding dimension $D$, we allocate
\[
D_x=\left\lfloor\frac{D}{3}\right\rfloor,\qquad
D_y=\left\lfloor\frac{D}{3}\right\rfloor,\qquad
D_d=D-D_x-D_y.
\]
The final positional encoding is formed by concatenation:
\[
\mathbf{e}_{\tau}
=
\Big[
\Phi_{\sin}(x_\tau;D_x)\ \Vert\
\Phi_{\sin}(y_\tau;D_y)\ \Vert\
\Phi_{\sin}(d_\tau;D_d)
\Big]
\in\mathbb{R}^{D},
\]
and injected additively:
\[
\mathbf{h}^{0}_{\tau}=\mathbf{z}_{\tau}+\mathbf{e}_{\tau}.
\]
This scheme explicitly encodes continuous coordinates and depth, but its functional form is fixed.

\paragraph{(3) FiLM PE (Pos + Depth) via Learned Conditioning}
We first embed position and depth into two half-dimensional vectors:
\[
\mathbf{u}^{\mathrm{pos}}_{\tau}=\phi_{\mathrm{pos}}(x_\tau,y_\tau)\in\mathbb{R}^{D/2},
\qquad
\mathbf{u}^{\mathrm{dep}}_{\tau}=\phi_{\mathrm{dep}}(d_\tau)\in\mathbb{R}^{D/2},
\]
where $\phi_{\mathrm{pos}}$ and $\phi_{\mathrm{dep}}$ are lightweight MLPs. 
We concatenate them into a conditioning vector
\[
\mathbf{g}_{\tau}=
\big[
\mathbf{u}^{\mathrm{pos}}_{\tau}\ \Vert\ 
\mathbf{u}^{\mathrm{dep}}_{\tau}
\big]\in\mathbb{R}^{D}.
\]
A learnable projection $\psi$ then predicts FiLM parameters (scale and shift),
\[
(\boldsymbol{\gamma}_{\tau},\boldsymbol{\beta}_{\tau})
=
\psi(\mathbf{g}_{\tau}),\qquad
\boldsymbol{\gamma}_{\tau},\boldsymbol{\beta}_{\tau}\in\mathbb{R}^{D}.
\]
Finally, instead of additive injection, we modulate the input token embedding via
\[
\mathbf{h}^{0}_{\tau}
=
\mathrm{FiLM}(\mathbf{z}_{\tau};\boldsymbol{\gamma}_{\tau},\boldsymbol{\beta}_{\tau})
=
(1+\boldsymbol{\gamma}_{\tau})\odot \mathbf{z}_{\tau}+\boldsymbol{\beta}_{\tau}.
\]
Compared to additive positional embeddings, FiLM allows multiplicative, token-wise, scale-aware conditioning, which is particularly suitable for heterogeneous multi-scale token layouts.

\begin{table}[H]
\centering
\caption{Averaged relative $\ell_2$ error (lower is better) under different positional encoding designs.}
\label{tab:pe_ablation}
\begin{tabular}{lccc}
\toprule
 & \multicolumn{3}{c}{Positional encoding} \\
\cmidrule(lr){2-4}
Horizon & Learnable PE & Sinusoidal PE (pos+depth) & FiLM PE (pos+depth) \\
\midrule
1-step  & 1.080 & 1.006 & 0.972 \\
10-step & 2.457 & 2.387 & 2.261 \\
\bottomrule
\end{tabular}
\end{table}

\paragraph{Results and Analysis}
Table~\ref{tab:pe_ablation} shows a clear and consistent ranking among the three positional encoding strategies.
Overall, incorporating explicit \emph{(pos+depth)} signals is crucial for AMR token sets: both sinusoidal and FiLM variants substantially outperform the purely learnable embedding design, and the gap becomes more pronounced under long-horizon autoregressive rollout.

\textbf{Learnable PE performs the worst.}
The learnable baseline yields the highest error (1.080 for 1-step and 2.457 for 10-step), suggesting that naive trainable embeddings provide insufficient inductive bias for multi-scale token layouts.
Although the quadtree leaf offsets allow refined tokens to be distinguished from coarse tokens, the resulting embedding still lacks \emph{metric structure}: token coordinates, neighborhood relations, and cross-scale alignments are not encoded explicitly.
Consequently, the Transformer must infer how refined tokens relate to their parent patch and how refinement interacts with spatial locality \emph{solely from data}.
This is particularly challenging in the foundation-model setting where training spans diverse PDE families and regimes; a purely learnable table must simultaneously memorize many geometry patterns, which can lead to under-generalization or brittle behavior when the refinement pattern shifts across samples.
This weakness is amplified in 10-step rollout: small geometric inconsistencies at the token level can translate into systematic local errors (e.g., misaligned gradients or smeared interfaces), which then propagate and compound through autoregressive feedback.

\textbf{Sinusoidal PE improves accuracy but remains rigid.}
Replacing learnable embeddings with fixed sinusoidal features over $(x,y,d)$ improves performance to 1.006 (1-step) and 2.387 (10-step).
This gain indicates that injecting an explicit continuous geometry prior is beneficial: the network receives consistent signals about spatial location and refinement depth across all samples, alleviating the burden of learning geometry from scratch.
However, sinusoidal encodings impose a predetermined functional form and treat the three coordinates in a largely separable manner (via concatenation over $x$, $y$, and $d$).
In AMR settings, the effective notion of locality is \emph{scale-dependent}: refined tokens represent smaller regions and should interact differently with neighbors compared to coarse tokens.
A rigid sinusoidal basis may be overly constrained for capturing such hierarchical interactions, especially when the optimal representation requires nonlinear coupling between position and depth (e.g., depth-dependent receptive fields or scale-dependent attention biases).
As a result, sinusoidal PE provides a strong but somewhat inflexible prior, leading to improvements over learnable PE but falling short of the best design.

\textbf{FiLM PE achieves the best short- and long-horizon performance.}
Our FiLM-based design consistently performs the best (0.972 for 1-step and 2.261 for 10-step).
Compared with additive PE, FiLM introduces multiplicative conditioning, effectively allowing the model to \emph{reparameterize} token features as a function of $(x,y,d)$.
This is particularly important for multi-scale token sets: depth information can modulate the representation to reflect scale-dependent semantics (e.g., refined tokens emphasizing high-frequency content), while spatial coordinates provide stable alignment across samples.
Empirically, the improvement is more evident under 10-step rollout, which highlights that FiLM not only improves one-step fitting, but also enhances long-horizon stability by reducing geometric mismatches that would otherwise accumulate through autoregressive inference.

\textbf{Takeaway.}
These results support the hypothesis that AMR-aware Transformers require positional encodings that are both \emph{geometrically grounded} (explicit $(x,y,d)$) and \emph{expressive} (learnable nonlinear modulation).
Purely learnable tables lack the necessary structural bias, whereas fixed sinusoidal features are partially constrained by their rigid basis.
FiLM conditioning offers a favorable middle ground: it preserves strong geometric priors while enabling flexible, depth-dependent feature adaptation, yielding the most robust accuracy--stability trade-off.
Unless otherwise stated, we adopt FiLM PE (pos+depth) in all subsequent experiments.

\subsection{Input Noise Magnitude}
We investigate the effect of injecting Gaussian noise into the input as an implicit regularizer, and evaluate the averaged relative $\ell_2$ error under both one-step prediction and long-horizon autoregressive rollout.
As summarized in Table~\ref{tab:noise_ablation}, a moderate noise level ($\sigma=0.01$) yields the best overall performance, achieving the lowest error for both 1-step prediction (0.972) and 10-step rollout (2.261).

\begin{table}[H]
\centering
\caption{Ablation on the magnitude of injected Gaussian noise (lower is better). We report averaged relative $\ell_2$ error for one-step prediction and 10-step autoregressive rollout. A moderate noise level ($0.01$) consistently achieves the best performance on both short-term and long-horizon forecasting.}
\label{tab:noise_ablation}
\small
\setlength{\tabcolsep}{7pt}
\begin{tabular}{lcccc}
\toprule
Horizon & Noise $=0$ & Noise $=0.001$ & Noise $=0.01$ & Noise $=0.02$ \\
\midrule
1-step  & 1.125 & 1.125 & \textbf{0.972} & 1.496 \\
10-step & 3.315 & 3.478 & \textbf{2.261} & 3.504 \\
\bottomrule
\end{tabular}
\end{table}

This consistent improvement suggests that moderate perturbations encourage the model to learn a smoother and more robust dynamics mapping, effectively reducing sensitivity to small input variations and mitigating exposure bias during autoregressive inference.
In contrast, the noiseless setting ($\sigma=0$) and a very small noise ($\sigma=0.001$) provide insufficient regularization, resulting in noticeably larger rollout errors due to error accumulation.
On the other hand, excessive noise ($\sigma=0.02$) starts to distort physically meaningful local structures and degrades both short-term accuracy and long-term stability.
Therefore, we adopt $\sigma=0.01$ as the default noise magnitude in all subsequent experiments.

\end{document}